\newtheorem{lemma}{Lemma}
\newtheorem{proposition}{Proposition}
\icmltitlerunning{Robust Learning from Untrusted Sources}
\DeclareRobustCommand\onedot{\futurelet\@let@token\@onedot}
\def\@onedot{\ifx\@let@token.\else.\null\fi\xspace}
\def\iid{{i.i.d}\onedot}
\def\eg{{e.g}\onedot} 
\def\ie{{i.e}\onedot}
\begin{document}
\setlength{\dbltextfloatsep}{8pt plus 1.0pt minus 2.0pt}
\setlength{\dblfloatsep}{8pt plus 1.0pt minus 2.0pt}
\setlength{\intextsep}{8pt plus 1.0pt minus 2.0pt}
\addtolength{\abovedisplayskip}{-.15\baselineskip}
\addtolength{\belowdisplayskip}{-.15\baselineskip}
\addtolength{\abovedisplayshortskip}{-1.0\baselineskip}
\addtolength{\belowdisplayshortskip}{-1.0\baselineskip}

\twocolumn[
\icmltitle{Robust Learning from Untrusted Sources}

% It is OKAY to include author information, even for blind
% submissions: the style file will automatically remove it for you
% unless you've provided the [accepted] option to the icml2019
% package.

% List of affiliations: The first argument should be a (short)
% identifier you will use later to specify author affiliations
% Academic affiliations should list Department, University, City, Region, Country
% Industry affiliations should list Company, City, Region, Country

% You can specify symbols, otherwise they are numbered in order.
% Ideally, you should not use this facility. Affiliations will be numbered
% in order of appearance and this is the preferred way.
\icmlsetsymbol{equal}{*}

\begin{icmlauthorlist}
\icmlauthor{Nikola Konstantinov}{IST}
\icmlauthor{Christoph H. Lampert}{IST}
\end{icmlauthorlist}

\icmlaffiliation{IST}{Institute of Science and Technology, Klosterneuburg, Austria}

\icmlcorrespondingauthor{Nikola Konstantinov}{nkonstan@ist.ac.at}
%\icmlcorrespondingauthor{Christoph H. Lampert}{chl@ist.ac.at}

% You may provide any keywords that you
% find helpful for describing your paper; these are used to populate
% the "keywords" metadata in the PDF but will not be shown in the document
\icmlkeywords{Robustness, domain adaptation, untrusted, sources, PAC learning}

\vskip 0.3in
]

\printAffiliationsAndNotice{}  % leave blank if no need to mention equal contribution

\begin{abstract}
Modern machine learning methods often require more data for training than a single expert can provide. Therefore, it has become a standard procedure to collect data from multiple external sources, \eg via crowdsourcing. Unfortunately, the quality of these sources is not always guaranteed. As further complications, the data might be stored in a distributed way, or might even have to remain private. In this work, we address the question of how to learn robustly in such scenarios. Studying the problem through the lens of statistical learning theory, we derive a procedure that allows for learning from all available sources, yet automatically suppresses irrelevant or corrupted data. We show by extensive experiments that our method provides significant improvements over alternative approaches from robust statistics and distributed optimization.
\end{abstract}

\section{Introduction}
\label{sec:intro}
Due to the outstanding performance of modern machine learning algorithms on various real-world tasks, there is an increasing amount of interest by practitioners in producing predictive models, specific to their purposes. In many application domains, however, it may be prohibitively expensive for a single expert to produce a high-quality labeled dataset, that is large enough for training a good model. Therefore, it has become a common practice to obtain data from various external data sources. Examples range from the use of crowdsourcing platforms, through collecting data from different websites and social networks profiles, to collaborating with other parties working in similar domains.

Naturally, datasets obtained from such sources vary greatly in quality, reliability and relevance for the learning task. For instance, genetic data from multiple laboratories may have been obtained via different measurement devices or data preprocessing techniques \cite{wahlsten2003different}. In the case of crowdsourcing, a typical problem is label bias and label noise, due to incompetent or malicious workers \cite{Wais10towardsbuilding}. More generally, statistical and machine learning models are known to suffer in performance due to gross errors, contaminations and adversarial modifications of the data \cite{tukey1960survey, Biggio}. The variety of possible deviations from the target data distribution, as well as the large volume and dimensionality of the data in real-world applications, make the assessment of the quality of the provided data a difficult task. An additional complication is that the data might have to remain decentralized, because of high communication costs, or it might not be directly available for inspection, due to privacy constraints. 

In this paper we study the problem of \textit{how to learn from multiple untrusted sources}, while being \textit{robust to any corruptions} of the data provided from each of them. As an alternative to the naive approaches of simply training on all data or only on a trusted subset, we propose a method that automatically assigns weights to the sources. To this end, we build up on techniques from the domain adaptation literature and prove an upper bound on the expected loss of a predictor, learned by minimizing any weighted version of the empirical loss. Based on these theoretical insights, our algorithm selects the weights for the sources by approximately minimizing this upper bound.

Intuitively, \textit{the weights are assigned to the sources according to the quality and reliability of the data they provide}, quantified by an appropriate measure of trust we introduce. This is achieved by comparing the data from each source to a \textit{small reference dataset}, obtained or trusted by the learner. The measure can also be computed locally at every source or by a gradient-based optimization procedure, which allows for the implementation of the algorithm under \textit{privacy constraints}, as well as its integration into any \textit{standard distributed learning framework}.

We perform an extensive experimental evaluation \footnote{Code is available at \href{https://github.com/NikolaKon1994/Robust-Learning-from-Untrusted-Sources}{https://github.com/NikolaKon1994/Robust-Learning-from-Untrusted-Sources}} of our algorithm and demonstrate its ability to learn from all available data, while successfully suppressing the effect of corrupted or irrelevant sources. It consistently outperforms both naive approaches of learning on all available data directly or learning on the reference dataset only, \textit{for any amount and any type of data contamination} considered. We also observe its performance to be superior to multiple baseline methods from robust statistics and robust distributed learning.

\section{Related work}
\label{sec:related_work}
Learning from multiple sources is a topic relevant for many applications of machine learning and data corruption is a problem acknowledged in some of these areas. In particular, \cite{bi2014learning, kajino2012convex, awasthi2017efficient} and references therein consider the problem of label noise in \textit{crowdsourced data}. The non-\iid split of the data on local devices is one of the main characteristics of \textit{federated learning} and the pioneering work of \cite{mcmahan2017communication} addresses this by occasionally averaging local models to ensure global consistency. Fault tolerance and prevention of sybil attacks in federated learning have been considered by \cite{smith2017federated} and \cite{fung2018mitigating} respectively. Robustness has also been explored in the context of \textit{multi-view} learning, where data arrives from various feature extractors \cite{zhao2017multi, xie2017robust, zhang2017robust}.

The work closest in spirit to ours is the one of \cite{qiao2018learning}, who provide efficient algorithms for learning from batches of data, an $\epsilon$-fraction of which can be malicious. Their focus is different though, as they only study algorithms for learning discrete distributions and explore the regime where each data source provides a small amount of samples. In contrast, we are interested in general supervised learning problems and work in a setting where more data is available per source. \cite{charikar2017learning, NIPS2018_8246} also study learning with a reference dataset as a protection against data corruption, but focus on a single untrusted dataset only and on convex objectives and label noise respectively. There is a vast body of literature focusing on robustness of learning algorithms to corruptions \textit{within} a dataset, \eg \cite{tukey1960survey, huber2011robust, diakonikolas2016robust, prasad2018robust}, and on identifying data corruptions at \textit{prediction time}, \eg \cite{hendrycks2017baseline,sun2018ksconf}. These lines of work are orthogonal to ours, since we consider \textit{multiple training datasets}, some of which are corrupted, and hence a literature review in this direction is beyond the scope of this paper.

Another related area is the one of \textit{robust distributed learning and optimization}. The work of \cite{feng2014distributed} develops a method for implementing any robust supervised learning algorithm in a distributed manner. \cite{feng2017fundamental} provide lower bounds for the communication complexity of PAC learning in a distributed environment, in the presence of malicious outliers. Fault tolerance and resistance to adversarial behavior of individual nodes in a distributed system have been studied from the point of view of Byzantine-robust distributed optimization, \eg \cite{blanchard2017machine, pmlr-v80-yin18a, NIPS2018_7712}. These works consider arbitrary (even adversarial) behavior of the nodes, however they study the convergence of gradient-based optimization procedures and typically have to assume that at least half of the nodes behave normally. In contrast, we are interested in the generalization performance of empirical risk minimizers and make no assumptions about the number of corrupted sources. The worst-case performance of distributed SGD has also been studied in the context of asynchronous training \cite{NIPS2015_5717, Alistarh:2018:CSG:3212734.3212763}.

On the methodological level, we borrow techniques from the field of domain adaptation. To measure the difference between data distributions, we use the same integral probability metric as \cite{mohri2012new, zimin2017learning}. The problem we study is related to \textit{multi-source domain adaptation}, \eg \cite{crammer2008learning, ben2010theory}, and to \textit{multi-task learning} with unlabeled data \cite{PenLam17}. In particular, our Theorem \ref{thm:main_bound} is similar to a result in \cite{zhang2012generalizationArxiv}. We refer to the paragraph after Theorem \ref{thm:main_bound} for a more detailed comparison. However, all these works focus on sharing information between similar domains, in order to obtain better predictors for a target task, while we are interested in applying such techniques for detecting untrustworthy sources of data and improving the robustness of the learning procedure. 

A relation between robustness and domain adaptation has been explored in the work of \cite{mansour2014robust}, who use a property called \textit{algorithmic robustness} to derive generalization bounds for domain adaptation. Another related line of work is the one of \cite{mansour2009domain, NIPS2018_8046}, who provide guarantees for a classifier learned on data from $N$ domains on any target distribution that is a mixture of the distributions of the sources. Domain adaptation techniques were also used by \cite{song2018improving}, for improving the test-time robustness of predictive models to adversarial examples.

\section{Robust learning from untrusted sources}
\label{sec:main_results_section}
Given a \textit{small reference dataset}, we want to leverage additional training data from \textit{multiple untrusted sources} in an optimal way, so that the obtained predictor performs well on a target distribution. A naive approach will be to trust all data, merge it into one dataset and train end-to-end to obtain a predictive model. Such an approach will intuitively be vulnerable to irrelevant or low-quality data provided by some sources. In this section, we design a more \textit{robust} algorithm that instead minimizes a weighted empirical loss.
\subsection{Theory}
\label{sec:theory}
\textbf{Setup.} Let $\mathcal{X}$ be an input space and $\mathcal{Y}$ be an output space. Our theoretical setup covers both the case of classification ($\mathcal{Y} = \{1, 2, \ldots, K\}$) and regression ($\mathcal{Y} = \mathbb{R}$). We assume that the learner has access to a \textit{small reference dataset} $S_T \vcentcolon= \{\left(x_{T, 1}, y_{T,1}\right), \ldots, \left(x_{T, m_T}, y_{T, m_T}\right)\}$ of $m_T$ samples drawn \iid from a target distribution $\mathcal{D}_T$ over $\mathcal{X}\times \mathcal{Y}$. In addition, training data is available from $N$ \textit{untrusted data sources}, each of them characterized by its own distribution, $\mathcal{D}_i$, over $\mathcal{X}\times\mathcal{Y}$, possibly different from $\mathcal{D}_T$. We denote the number of samples from source $i$ by $m_i$. Let the corresponding \iid datasets be $S_i \vcentcolon= \{\left(x_{i,1}, y_{i,1}\right), \ldots, \left(x_{i,m_i}, y_{i, m_i}\right)\} \overset{\iid}{\sim} \mathcal{D}_i$ for each $i = 1, \ldots, N$.

Let $L: \mathcal{Y} \times \mathcal{Y} \rightarrow \mathbb{R}_{+}$ be a loss function, bounded by some $M > 0$. For any distribution $\mathbb{P}$ on $\mathcal{X}\times\mathcal{Y}$ and any function $h: \mathcal{X} \rightarrow \mathcal{Y}$, denote by
$$\epsilon_\mathbb{P} \left(h\right) = \mathbb{E}_{\left(x,y\right)\sim \mathbb{P}}\left(L\left(h(x),y\right)\right)$$
the expected loss of the predictor $h$ with respect to the distribution $\mathbb{P}$. Let $\epsilon_i \left(h\right) = \epsilon_{\mathcal{D}_i} \left(h\right)$ be the expected loss of a predictor $h$ on the distribution of the $i$-th source. Denote by $\hat{\epsilon}_i$ the corresponding empirical counterparts.

Given a hypothesis class $\mathcal{H} \subset \{h:\mathcal{X}\rightarrow\mathcal{Y}\}$, our goal is to use all samples from the \textit{sources} to construct a hypothesis with low expected loss on the target distribution $\mathcal{D}_T$. Note that if we also want to use the reference data at training time, we can simply include it as one of the data sources.

\textbf{Source-specific weights.} For a vector of weights $\alpha = \left(\alpha_1, \ldots, \alpha_N\right)$, such that $\sum_{i=1}^N \alpha_i = 1$ and $\alpha_i \geq 0$ for all $i$, we define the $\alpha$-weighted expected risk of a predictor $h$ as:
\begin{equation}
    \epsilon_{\alpha}(h) =\!\sum_{i=1}^N \alpha_i \epsilon_i (h) = \!\sum_{i=1}^N \alpha_i \mathbb{E}_{(x,y)\sim \mathcal{D}_i}\left(L(h(x), y)\right)
\end{equation}
and its empirical counterpart as:
\begin{equation}
\label{eqn:defn_of_weighted_error}
\begin{split}
\hat{\epsilon}_{\alpha}(h) & =\!\sum_{i=1}^N \alpha_i \hat{\epsilon}_{i}(h) = \!\sum_{i=1}^N \frac{\alpha_i}{m_i}
\!\sum_{j=1}^{m_i}L\left(h(x_{i,j}), y_{i,j}\right).
\end{split}
\end{equation}
With $\mathcal{H}$ as our hypothesis class, let $\hat{h}_{\alpha} = \text{argmin}_{h\in \mathcal{H}} \hat{\epsilon}_{\alpha}\left(h\right)$.

We aim to find weights $\alpha$, such that the predictor $\hat{h}_{\alpha}$ performs well on the target task, \ie such that $\epsilon_T(\hat{h}_{\alpha})$ is small.

\textbf{Evaluating the quality of a source.} Intuitively, a good learning algorithm will assign more weight to sources, whose distribution is similar to the target one, and less weight to those that provide different or low-quality data. Although any standard distance measure on the space of distributions could in theory be used to measure such differences, most of them would not provide any guarantees on the performance of the learned classifier. Furthermore, most similarity measures between distributions, \eg the Kullback-Leibler divergence, are hard to estimate from finite data and overly strict, as they are independent of the learning setup.

We therefore adopt a specific notion of distance that depends on the hypothesis class and allows us to reason about the change in performance of a predictor from $\mathcal{H}$ learned on one distribution, but applied to the other. Following \cite{mohri2012new}, we define the \textit{discrepancy} between the distributions $\mathcal{D}_i$ and $\mathcal{D}_T$ with respect to the hypothesis class $\mathcal{H}$ as:
\begin{align}
\label{eqn:defn_of_disrepancy}
d_{\mathcal{H}}\left(\mathcal{D}_i, \mathcal{D}_T\right) = \sup_{h\in\mathcal{H}}\left(|\epsilon_i (h) - \epsilon_T (h)|\right).
\end{align}
Intuitively, the discrepancy between the two distributions is large, if there exists a predictor that performs well on one of them and badly on the other. On the other hand, if all functions in the hypothesis class perform similarly on both, then $\mathcal{D}_i$ and $\mathcal{D}_T$ have low discrepancy.

The following theorem provides a bound on the expected loss on the target distribution of the predictor $\hat{h}_{\alpha}$, \ie the minimizer of the $\alpha$-weighted sum of the empirical losses over the source data.
\begin{restatable}{theorem}{mainthm}
\label{thm:main_bound}
Given the setup above, let $\hat{h}_{\alpha} = \textrm{argmin}_{h\in\mathcal{H}}\hat{\epsilon}_{\alpha}(h)$ and $h_T^{*} = \textrm{argmin}_{h\in\mathcal{H}}\epsilon_{T}(h)$. For any $\delta > 0$, with probability at least $1 - \delta$ over the data:
\begin{align}
\epsilon_T(\hat{h}_{\alpha}) & \leq \epsilon_T (h_T^{*}) + 4\sum_{i=1}^N \alpha_i \mathcal{R}_i \left(\mathcal{H}\right) + 2\sum_{i=1}^N \alpha_i d_{\mathcal{H}}\left(\mathcal{D}_i, \mathcal{D}_T\right) 
\nonumber
\\ 
 & + 6 \sqrt{\frac{\log\left(\frac{4}{\delta}\right)M^2}{2}}\sqrt{\sum_{i=1}^N\frac{\alpha_i^2}{m_i}},
\label{eqn:final_bound}
\end{align}
where, for each source $i = 1, \ldots, N$,
\begin{equation*}
\mathcal{R}_i \left(\mathcal{H}\right) = \mathbb{E}_{\sigma}\left(\sup_{f\in\mathcal{H}}\left(\frac{1}{m_i}\sum_{j=1}^{m_i}\sigma_{i,j}L(f(x_{i,j}), y_{i,j})\right)\right)
\end{equation*}
\noindent and $\sigma_{i,j}$ are independent Rademacher random variables. 
\end{restatable}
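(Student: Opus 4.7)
The plan is to follow the Mohri-style excess risk decomposition, adapted to the weighted, mismatched-distribution setting. The starting point is the telescoping identity
\begin{align*}
\epsilon_T(\hat h_\alpha) - \epsilon_T(h_T^*)
&= \bigl[\epsilon_T(\hat h_\alpha) - \epsilon_\alpha(\hat h_\alpha)\bigr] + \bigl[\epsilon_\alpha(\hat h_\alpha) - \hat\epsilon_\alpha(\hat h_\alpha)\bigr] \\
&\quad + \bigl[\hat\epsilon_\alpha(\hat h_\alpha) - \hat\epsilon_\alpha(h_T^*)\bigr] + \bigl[\hat\epsilon_\alpha(h_T^*) - \epsilon_\alpha(h_T^*)\bigr] \\
&\quad + \bigl[\epsilon_\alpha(h_T^*) - \epsilon_T(h_T^*)\bigr].
\end{align*}
The third bracket is $\leq 0$ by the definition of $\hat h_\alpha$ as minimizer of $\hat\epsilon_\alpha$. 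Each of the first and fifth brackets has the form $\epsilon_T(h) - \epsilon_\alpha(h) = \sum_i \alpha_i(\epsilon_T(h) - \epsilon_i(h))$ for a fixed $h$; by the triangle inequality and the definition of $d_{\mathcal{H}}$, each is bounded by $\sum_i \alpha_i d_{\mathcal{H}}(\mathcal{D}_i, \mathcal{D}_T)$, together yielding the $2\sum_i \alpha_i d_{\mathcal{H}}(\mathcal{D}_i, \mathcal{D}_T)$ term of the bound deterministically.

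The main work lies in controlling the two remaining brackets, each a one-sided deviation between the weighted expected and weighted empirical risks. I would bound both by the uniform deviation $\sup_{h \in \mathcal{H}}|\epsilon_\alpha(h) - \hat\epsilon_\alpha(h)|$ via the standard three-step argument adapted to the weighted case. Viewing $\hat\epsilon_\alpha$ as a sum of $\sum_i m_i$ independent bounded contributions in which the $(i,j)$-th summand has range $\alpha_i M/m_i$, McDiarmid's inequality applied to $\sup_h(\epsilon_\alpha(h) - \hat\epsilon_\alpha(h))$ yields concentration around its mean with bounded-difference variance $\sum_i m_i (\alpha_i M/m_i)^2 = M^2 \sum_i \alpha_i^2/m_i$. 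Ghost-sample symmetrization then bounds the expected supremum by $2\,\mathbb{E}\bigl[\mathbb{E}_\sigma \sup_h \sum_i (\alpha_i/m_i)\sum_j \sigma_{i,j} L(h(x_{i,j}), y_{i,j})\bigr]$, and by subadditivity of the supremum this inner expression is at most $\sum_i \alpha_i \mathcal{R}_i(\mathcal{H})$. A second McDiarmid application, with the same bounded differences, concentrates the weighted empirical Rademacher term around its mean. Applying this one-sided uniform bound once to each of the two remaining brackets, with $\delta$ partitioned appropriately across the resulting failure events, produces the $4\sum_i \alpha_i \mathcal{R}_i(\mathcal{H})$ and $6\sqrt{\log(4/\delta) M^2/2}\,\sqrt{\sum_i \alpha_i^2/m_i}$ contributions to the bound.

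The main obstacle is the careful bookkeeping of the weights through the concentration steps: one must verify that the per-coordinate bounded-difference constants are the heterogeneous $\alpha_i M/m_i$, so that McDiarmid combines into the sharp $\sum_i \alpha_i^2/m_i$ variance appearing in the bound, and that the symmetrization step yields the weighted empirical Rademacher complexity $\sum_i \alpha_i \mathcal{R}_i(\mathcal{H})$ rather than a looser uniform-rate quantity such as $\max_i \mathcal{R}_i$ or an average with improper weights. Once these scalings are tracked correctly, the remaining algebra and the union-bound accounting to match the constants $4$, $6$, and $\log(4/\delta)$ are routine.
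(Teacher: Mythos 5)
Your proposal is correct and follows essentially the same route as the paper: the telescoping decomposition reproduces the paper's chain of inequalities (with the discrepancy terms handled deterministically and the ERM step killing the middle bracket), and your uniform deviation bound — McDiarmid with heterogeneous bounded differences $\alpha_i M/m_i$, ghost-sample symmetrization, subadditivity of the supremum to reach $\sum_i \alpha_i \mathcal{R}_i(\mathcal{H})$, and a second McDiarmid to pass to the empirical Rademacher complexity — is exactly the paper's Proposition 1, with the same accounting yielding the constants $4$, $6$, and $\log(4/\delta)$.
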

\noindent A proof is provided in the supplementary material. % of Theorem \ref{thm:main_bound} 

We note that a similar result appears as Theorem 5.2 in the arXiv version \cite{zhang2012generalizationArxiv} of the NIPS paper \cite{zhang2012generalizationNIPS}. The authors bound the gap between the weighted empirical loss on the source data of any classifier and its expected loss on the target task, with the additional assumption of a deterministic labeling function for each source. Based on this, they study the asymptotic convergence of domain adaptation algorithms as the sample sizes at all sources go to infinity. In contrast, our theorem compares the performance of the minimizer $\hat{h}_{\alpha}$ of the $\alpha$-weighted empirical loss on the target task to the performance of the optimal (but unknown) $h_T^*$ and does not require deterministic labeling functions. Our target application is also different, since we use the bound to design learning algorithms that are robust to corrupted or irrelevant data, given finite amount of samples from each source.

\subsection{From bound to algorithm}
\label{sec:algorithm}
\textbf{Algorithm description.} To obtain a good predictor for the target task, we would like to choose $\alpha$, such that $\epsilon_T(\hat{h}_{\alpha})$ is as close as possible to $\epsilon_T(h^*_T)$ (the expected loss of the best hypothesis in $\mathcal{H}$). This suggests selecting the weights by minimizing the right-hand side of (\ref{eqn:final_bound}).

While the Rademacher complexities are functions of both the underlying distribution and the hypothesis class, in practice one usually works with a computable upper bound that is distribution-independent (\eg using VC dimension). For some common examples of such bounds we refer to the supplementary material, as well as to \cite{bousquet2004introduction, shalev2014understanding}. In our setting the hypothesis space $\mathcal{H}$ is fixed and therefore these bounds would be identical for all $i$. Therefore, we expect the $\mathcal{R}_i\left(\mathcal{H}\right)$ to be of similar order to each other and the impact of $\alpha$ on the second term in the bound to be negligible. We thus concentrate on optimizing the remaining terms.

Because the true discrepancies are unknown, we estimate them from the data by their empirical counterparts:
\begin{equation}
\label{eqn:defn_of_emp_disrepancy}
\begin{split}
d_{\mathcal{H}}\left(S_i, S_T\right) & = \sup_{h\in\mathcal{H}}\left(|\hat{\epsilon}_i (h) - \hat{\epsilon}_T (h)|\right) \\ & = \sup_{h\in\mathcal{H}}(|\frac{1}{m_i}\sum_{j=1}^{m_i} L\left(h\left(x_{i,j}\right), y_{i,j}\right) \\ & \quad \quad \text{  } - \frac{1}{m_T} \sum_{j=1}^{m_T} L\left(h\left(x_{T, j}\right), y_{T, j}\right)|).
\end{split}
\end{equation} 
In summary, the bound suggests to choose a weighting for the sources by minimizing: 
\begin{equation}
\label{eqn:minimization_formula}
\begin{split}
& \quad \min_{\alpha}  \sum_{i=1}^N \alpha_i d_{\mathcal{H}}\left(S_i, S_T\right) + \lambda \sqrt{\sum_{i=1}^N \frac{\alpha_i^2}{m_i}}, \\ & \text{subject to: } \sum_{i=1}^N \alpha_i = 1 \text{ and } \alpha_i \geq 0 \text{ for all } i,
\end{split}
\end{equation} where $\lambda > 0$ is a hyperparameter that can be selected by cross-validation on the reference dataset. The algorithm then proceeds to minimize the $\alpha$-weighted  empirical risk over the sources (\ref{eqn:defn_of_weighted_error}), possibly with a regularization term. Pseudocode of the algorithm is given in Algorithm \ref{alg:main_algo}.

\begin{algorithm}[t]
   \caption{Robust learning from untrusted sources}
   \label{alg:main_algo}
\begin{algorithmic}
   \STATE {\bfseries Inputs:} 1. Loss $L$, hypothesis set $\mathcal{H}$, parameter $\lambda$
   \STATE \quad \quad \quad \text{ } 2. Reference dataset $S_T$ 
   \STATE \quad \quad \quad \text{ } 3. Datasets $S_1, \ldots, S_N$ from the $N$ sources
   \FOR[Potentially in parallel]{$i=1$ {\bfseries to} $N$}
   \STATE Compute $d_{\mathcal{H}}\left(S_i, S_T\right)$
   \ENDFOR
   \STATE Select $\alpha$ by solving (\ref{eqn:minimization_formula}). 
   \STATE Minimize $\alpha$-weighted loss: $\hat{h}_{\alpha} = \textrm{argmin}_{h\in\mathcal{H}}\hat{\epsilon}_{\alpha}(h)$
   \STATE{\textbf{Return:}} $\hat{h}_{\alpha}$
\end{algorithmic}
\end{algorithm}

\textbf{Discussion.} While derived from our theoretical results, the minimization procedure for selecting the weights also has an intuitive interpretation. Note that the first term in (\ref{eqn:minimization_formula}) is small whenever large weights are paired with small discrepancies and hence encourages trusting sources that provide data similar to the reference target sample. The second term is small whenever the weights are distributed proportionally to the number of samples per source. Thus, it acts as a form of regularization, by encouraging the usage of information from as many sources as possible.

The hyperparamater $\lambda$ controls a trade-off between exploiting similar tasks and leveraging information from all sources. As $\lambda \rightarrow \infty$, all tasks are assigned weights proportional to the amount of training samples they provide and the model minimizes the empirical risk over all the data, regardless of the quality of the samples. In contrast, as $\lambda \rightarrow 0$, the model becomes more sensitive to differences between the source data and the clean reference set, until all weight is assigned to the source closest to the target domain. Assuming that the reference set is included as one of the data sources, these extremes correspond to the naive approach of trusting all sources and training on a merged dataset and not trusting any of them and training on the initial clean data only. In our experiments in Section \ref{sec:experiments} we will see that there is a better operating point between those two extremes. It naturally depends on the actual quality of the available data and our algorithm identifies it successfully.

\subsection{Learning from private or decentralized data}
\label{sec:discussion_privacy}
The described algorithm is straightforward to implement on top of any standard learning procedure, when the data from all $N$ sources is directly available to the learner. We now discuss how we can learn robustly in cases where the sources cannot fully reveal their data. There are many applications where such a situation can arise. For example, this can be due to privacy reasons in the case of medical and biological data or to communication costs and storage limitations in the case of distributed learning \cite{mcmahan2017communication}.

Here we focus on ways to compute the discrepancies under such constraints. Once this is done, the vector $\alpha$ can be computed easily and then any standard distributed training procedure, \eg \cite{dean2012large, mcmahan2017communication}, can be used to obtain the $\alpha$-weighted empirical loss minimizer. Standard approaches in distributed learning only require the exchange of gradients of minibatches with respect to the current state of the model between the data sources and the central server, so in particular the actual local datasets are never observed by the learner. In cases when the gradients may reveal sensitive information about the data, secure aggregation \cite{bonawitz2017practical} or other privacy-preserving distributed learning methods \cite{shokri2015privacy} can be used on top to ensure privacy. 

We distinguish two cases, depending on whether the reference dataset can be shared with the sources.

\textbf{Case 1: the reference dataset is available to all nodes.} If the reference dataset can be shared with the sources without privacy and communication complications, the discrepancies can be estimated \textit{locally on every source, in parallel}. If necessary, the computational protocol can be executed via a trusted computation method \cite{trustedcomp}, for example by using Software Guard Extensions (SGX extensions) \cite{mckeen2016intel}, to ensure the correctness of the procedure. The discrepancies alone can then be sent to the learner and the algorithm proceeds as described above. This approach ensures the privacy of the local datasets and allows for all discrepancies to be computed in parallel.

\textbf{Case 2: the reference dataset can not be shared.} In this case the learner can still compute the empirical discrepancies without observing the data from the sources directly, by using a gradient-based optimization procedure. This is because the function inside the supremum in (\ref{eqn:defn_of_emp_disrepancy}) decomposes into a term depending only on the reference dataset and a term depending only on the data of the source. Therefore, each discrepancy can be estimated by using a sequence of queries to the source about the gradient of a minibatch from its data with respect to a current candidate for the predictor achieving the supremum.

\section{Experiments}
\label{sec:experiments}
\subsection{Method and baselines}
We perform two large sets of experiments, following the setup considered in our paper. We train our algorithm on the data from all sources, including the reference dataset. The hyperparameter $\lambda$ is selected by 5-fold cross-validation \textit{on the trusted data}. The prediction tasks we consider here are binary classification problems with the $0/1$-loss, so we compute the empirical discrepancies by approximately solving the optimization problem (\ref{eqn:defn_of_emp_disrepancy}) as follows. Given the two datasets $S_i$ and $S_T$, the binary labels of one of them are flipped. The optimization can then be reduced to an empirical risk minimization problem that we solve using a standard convex relaxation approach. We refer to the supplementary material for a more formal description.

We compare the performance of our algorithm to the two naive approaches: training on the reference dataset only (corresponding to $\lambda = 0$ in our algorithm; denoted as \textit{"Reference only"} in the plots and tables) and merging the sources and training on all the data (corresponding to $\lambda \rightarrow \infty$; referred to as \textit{"All data"} in the plots and tables). All three methods use linear predictors and are trained by regularized logistic regression. The regularization parameter is always selected by 5-fold cross-validation on the reference data. The learned models are then evaluated on held out test data. 

Our aim is to test whether the proposed algorithm successfully leverages information from the sources, while being robust to various perturbations in the distributions of the local datasets, and whether exploiting the multi-source structure of the data gives any improvement over the two standard learning procedures. We also compare the performance of our algorithm to the following robust learning baselines. 

\textbf{Robust aggregation of local models.} We consider two recently proposed approaches for robust distributed learning. Following \cite{feng2014distributed}, one baseline learns a separate linear model based on each of the source datasets. The final linear predictor is then constructed as the geometric median of these locally learned weight vectors. Another baseline, inspired by \cite{pmlr-v80-yin18a}, takes the component-wise median instead. Thirdly, based on the locally learned models all $N$ estimates for the probability that a test point belongs to a certain class are computed and the final prediction for the label of that point is obtained by taking the median of these probabilities and thresholding it (referred to as "Median of probs" in the plots and tables). All these baselines aim at learning a robust ensemble of local models.

\textbf{Robust logistic regression.} We use the method of \cite{pregibon1982resistant}, based on the minimization of a Huber-type modification of the logistic loss. Specifically, the method minimizes the following robust loss function, instead of the classic logistic loss:
\begin{equation*}
    L\left(\boldsymbol{w}, \boldsymbol{x}, y\right) =
    \begin{cases}
      \log(1+ e^{-y\boldsymbol{w}^{\text{T}}\boldsymbol{x}}), \text{ if }\ \log(1+ e^{-y\boldsymbol{w}^{\text{T}}\boldsymbol{x}}) \leq c \\
      2\sqrt{c\log(1+ e^{-y\boldsymbol{w}^{\text{T}}\boldsymbol{x}})} - c, \text{ otherwise}
    \end{cases}
\end{equation*}
In our experiments, we use the recommended threshold value of $c = 1.345^2$, under which the estimate of the linear predictor has been shown to achieve a $95\% $ asymptotic relative efficiency \cite{pregibon1982resistant}. We also include a regularization term here and learn the regularization parameter by 5-fold cross-validation on the reference data. This baseline is an example of learning robustly on the whole dataset.

\textbf{Batch normalization.} Inspired by the success of \textit{batch normalization} in deep learning \cite{ioffe2015batch}, we compute the mean and standard deviation of the data at each source separately. We then subtract from each data point the mean and divide by the standard deviation of its corresponding dataset. We do the same for the reference data. We then merge all data together and train a logistic regression model with a regularization term. Finally, at test time every input is preprocessed by subtracting the mean and dividing by the standard deviation of the reference dataset, before applying the classifier. This approach aims at increasing robustness to source-specific biases.

\subsection{Amazon Products data}
\label{sec:experiments_products}

Our first set of experiments is on the "Multitask dataset of product reviews"\footnote{\href{http://cvml.ist.ac.at/productreviews/}{http://cvml.ist.ac.at/productreviews/}} \cite{PenLam17}, containing customer reviews for 957 Amazon products from the "Amazon product data" \cite{mcauley2015inferring, mcauley2015image}, together with a binary label indicating whether each review is positive or negative. All reviews in the data set are represented via 25-dimensional feature vectors, obtained by computing a GloVe word embedding \cite{pennington2014glove} and applying the sentence embedding procedure of \cite{arora2016simple}. We treat the classification of a review as positive or negative as a separate prediction task for each of the products, resulting in a total of 957 input-output distributions.

As a first, illustrative, experiment, we chose 20 books and 20 other, purposely different, products (\eg USB drives, mobile apps, meal replacement products). For simplicity, we refer to these additional products as "non-books". Intuitively, when learning to classify book reviews and given access to reviews from both some books and some non-books, a good learning algorithm will be able to leverage all this information, while being robust to the potentially misleading data coming from the less relevant products.

We randomly sample one of the books and 300 positive and 300 negative reviews for it. Out of those, 100 randomly selected reviews are made available to the learner as a reference dataset. The 500 remaining reviews from the product are used for testing. For a given value of $n \in \{0, 1,\ldots, 10\}$ the learner also has access to 100 labeled reviews from each of $10-n$ other randomly selected books and from each of $n$ randomly selected non-books. Our algorithm, as well as all baselines, are trained on this available data and the learned predictors are evaluated on the test set for the target product. For each $n$, we repeat this experiment 1000 times.

The results are plotted in Figure \ref{fig:results_products}. The $x$-axis corresponds to the number $n$ of non-books and the $y$-axis gives the average classification error. The error bars correspond to the standard errors of the mean estimates. We see that our method (green) performs uniformly better than the naive approaches of training on the reference dataset from the target product only (red) and training by merging all data together (blue). When reviews from many books are available, our algorithm is able to use this additional information even better than the model learned on all data. As the proportion of non-books increases, the performance of the second approach degrades, confirming the intuition that the reviews for the non-books provide less useful information for the target task. On the other hand, our algorithm successfully incorporates the information from the useful sources only, converging to the performance of the model learned on the reference data as all additional sources become non-books.

Our algorithm also outperforms all baselines. The batch normalization approach appears to reduce the effect of irrelevant sources, but its performance degrades as $n \rightarrow 10$. The median-based approaches perform reasonably when at most half of the sources are non-books, but eventually become worse than the other methods. The component-wise median and the robust loss baselines were excluded from the plot for clarity, as they performed uniformly worse than the other baselines, ranging in average classification error from 0.338 to 0.375 and from 0.348 to 0.372 respectively. Note that the robust loss function of \cite{pregibon1982resistant} is non-convex, so the poor performance of this baseline is presumably due to failure of the gradient descent optimization procedure to converge to a good local minimum.

\begin{figure}
\centering
  \includegraphics[width=.95\linewidth]{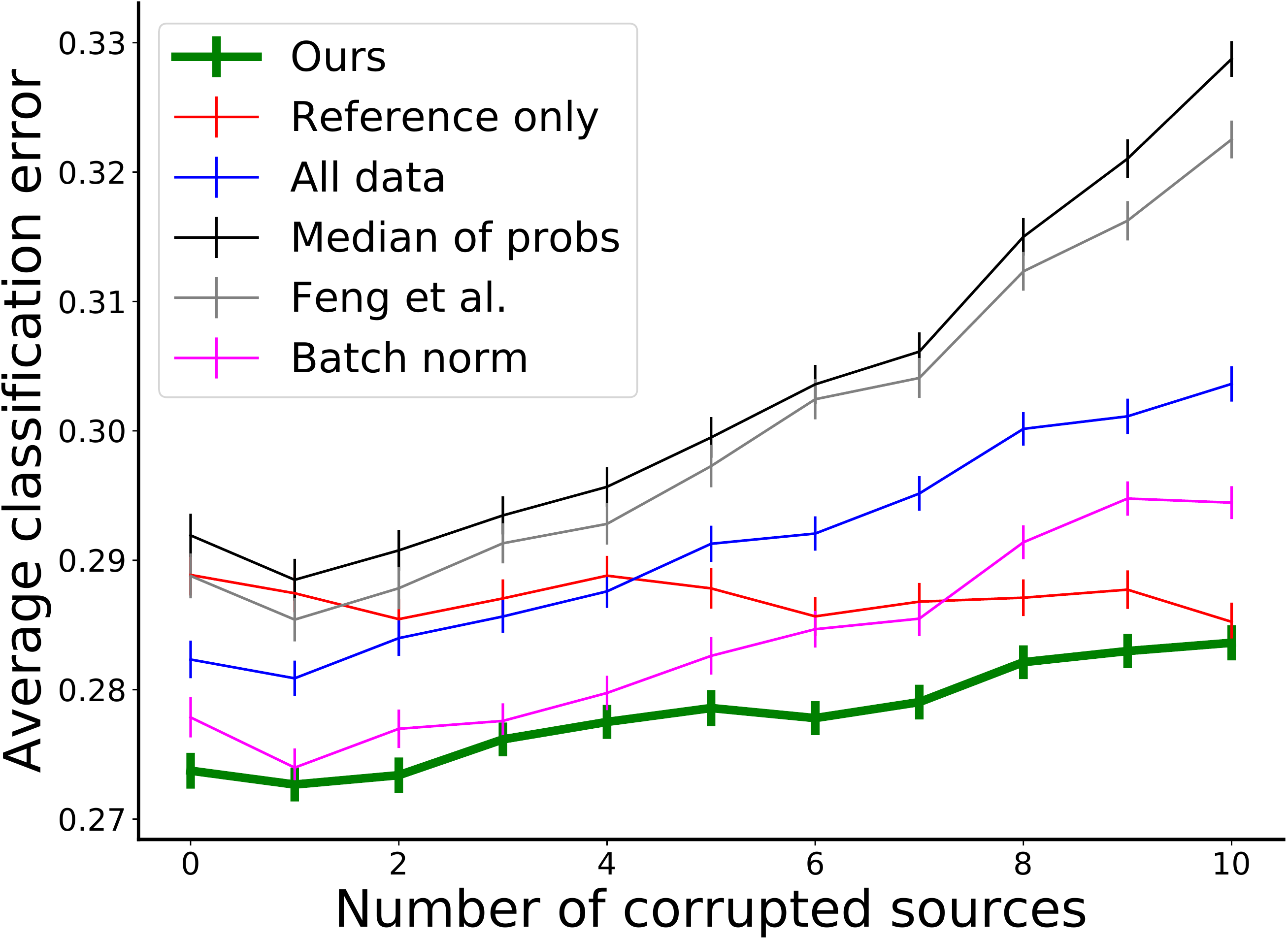}
  \caption{Results from the experiments on 20 books and 20 other products from the "Multitask dataset of product reviews". The $x$-axis gives the number $n$ of non-books in an experiment and the $y$-axis - the mean classification error. Error bars give the standard error of the estimates.}
  \label{fig:results_products}
\end{figure}

\begin{table}[t]
\caption{Results from the experiment on all 957 products.}
\label{table:products_table}
\centering\setlength{\tabcolsep}{1.5pt}
 \begin{tabular}{| c | c |} 
 \hline
 Algorithm & Mean classification error \\ %[0.5ex] 
 \hline\hline 
\textbf{Ours} & $\mathbf{0.289 \pm 0.0016 } $\\
Reference only &  $0.301 \pm 0.0019$\\
All data & $0.312 \pm 0.0017$\\
Median of probs. & $0.325 \pm 0.0021$  \\
Geom.median~\cite{feng2014distributed} & $0.329 \pm 0.0021$ \\
Comp.median~\cite{pmlr-v80-yin18a} & $0.329 \pm 0.0021$\\
Robust loss~\cite{pregibon1982resistant} & $0.353 \pm 0.0021$\\
Batch norm & $0.298 \pm 0.0016$ \\
 \hline
 \end{tabular}
\end{table}

Additionally, we performed an experiment on the set of all 957 products. With every product as a prediction task, we randomly selected 100 reviews from it as a reference dataset, leaving 500 for testing. An additional set of 100 labeled reviews were available from every other product. The algorithms were trained on all available data and evaluated on the test set. The average classification errors achieved by the algorithms are presented in Table \ref{table:products_table}, together with the standard errors of those estimates. We see in particular that our algorithm successfully uses the information from multiple sources to achieve the best overall performance.

\begin{figure*}[h]
\begin{subfigure}{.33\textwidth}
  \centering
  \includegraphics[width=.95\linewidth]{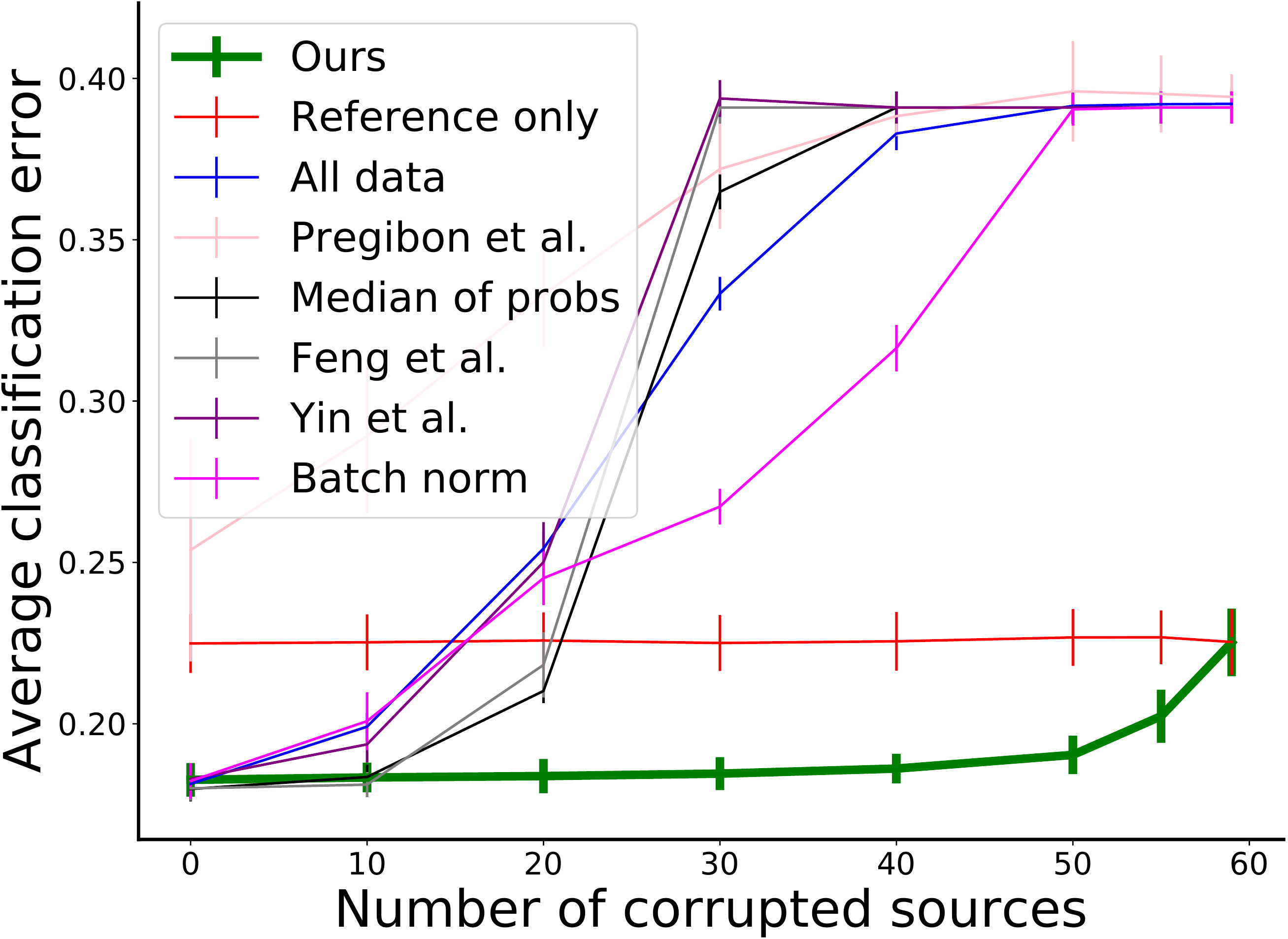}
  \caption{Label bias}
  \label{fig:animals_enforce_label}
\end{subfigure}%
\hfill
\begin{subfigure}{.33\textwidth}
  \centering
  \includegraphics[width=.95\linewidth]{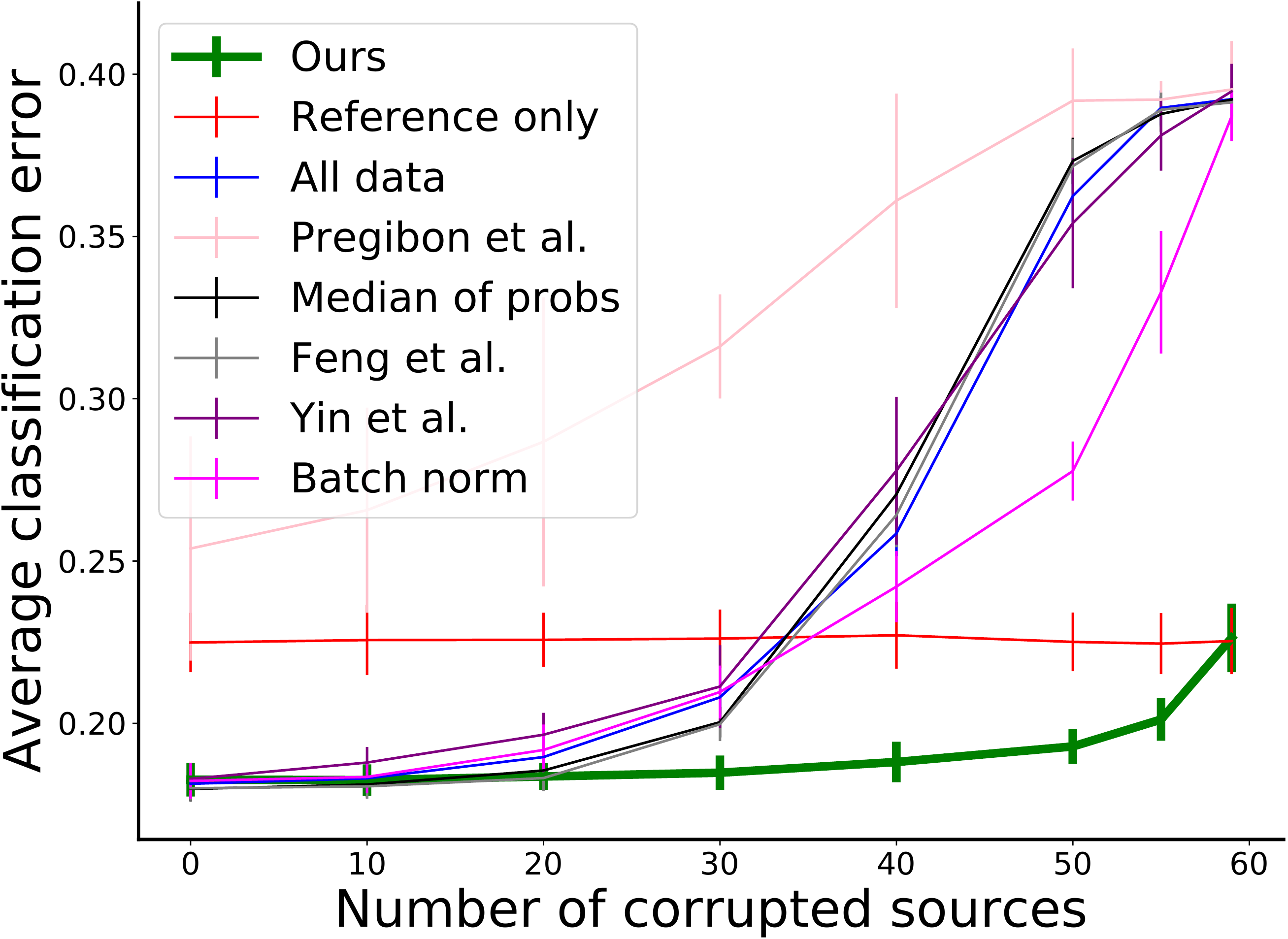}
  \caption{Shuffled labels}
  \label{fig:animals_shuffle_labels}
\end{subfigure}%
\hfill
\begin{subfigure}{.33\textwidth}
  \centering
  \includegraphics[width=.95\linewidth]{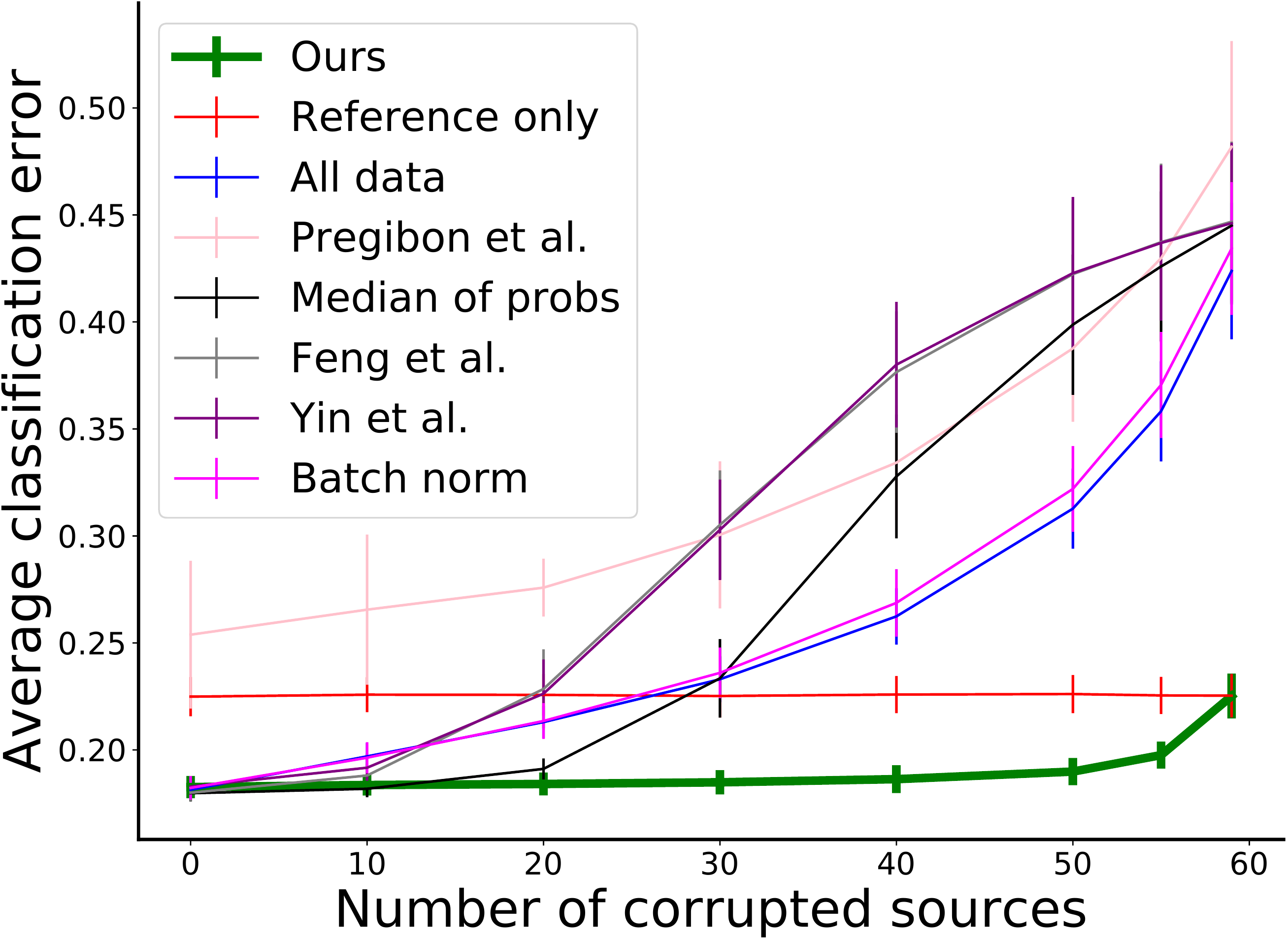}
  \caption{Shuffled features}
  \label{fig:animals_shuffle_inputs}
\end{subfigure}
\vskip\baselineskip
\begin{subfigure}{.33\textwidth}
  \centering
  \includegraphics[width=.95\linewidth]{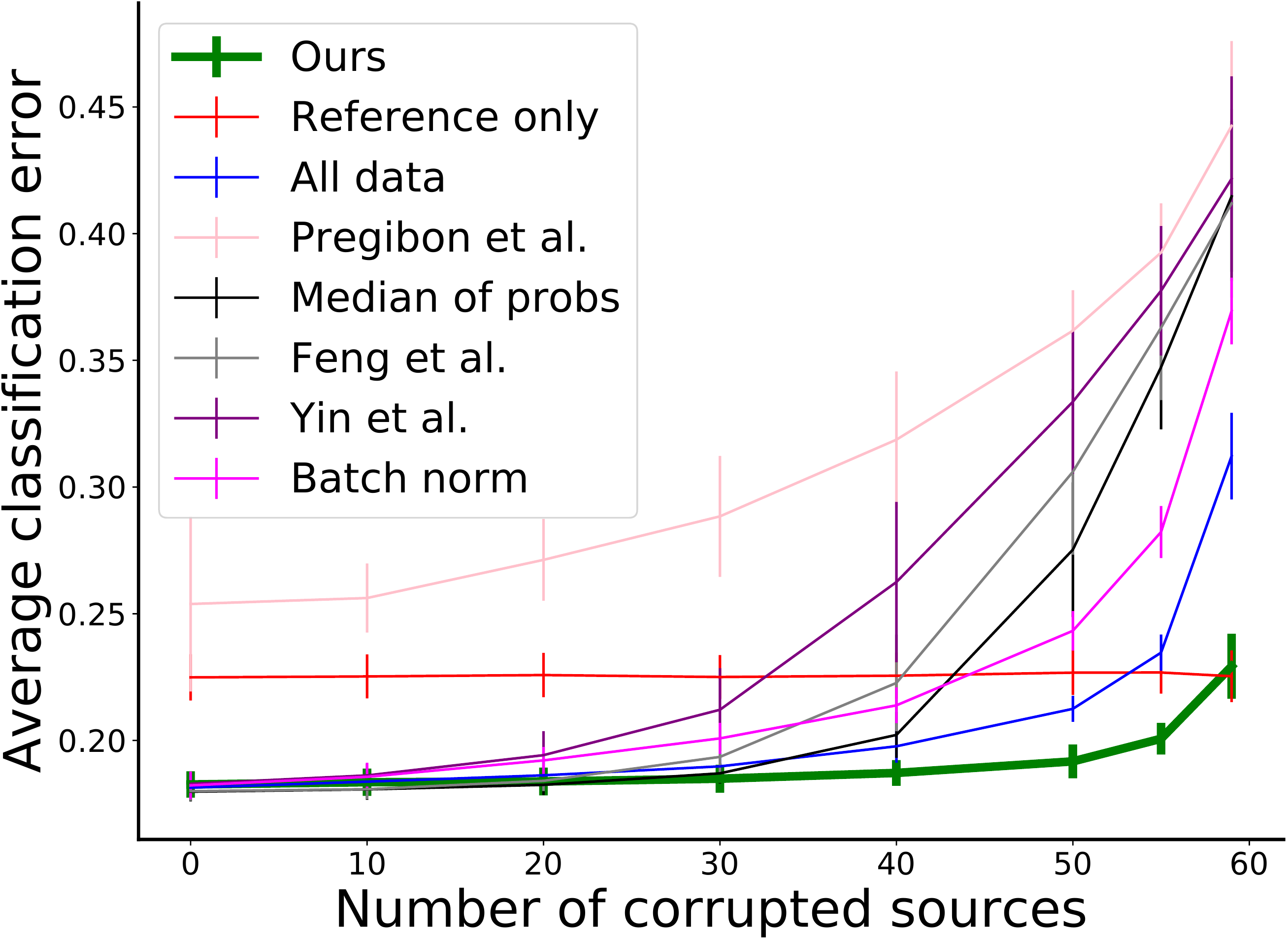}
  \caption{Blurred images}
  \label{fig:animals_blured}
\end{subfigure}%
\hfill
\begin{subfigure}{.33\textwidth}
  \centering
  \includegraphics[width=.95\linewidth]{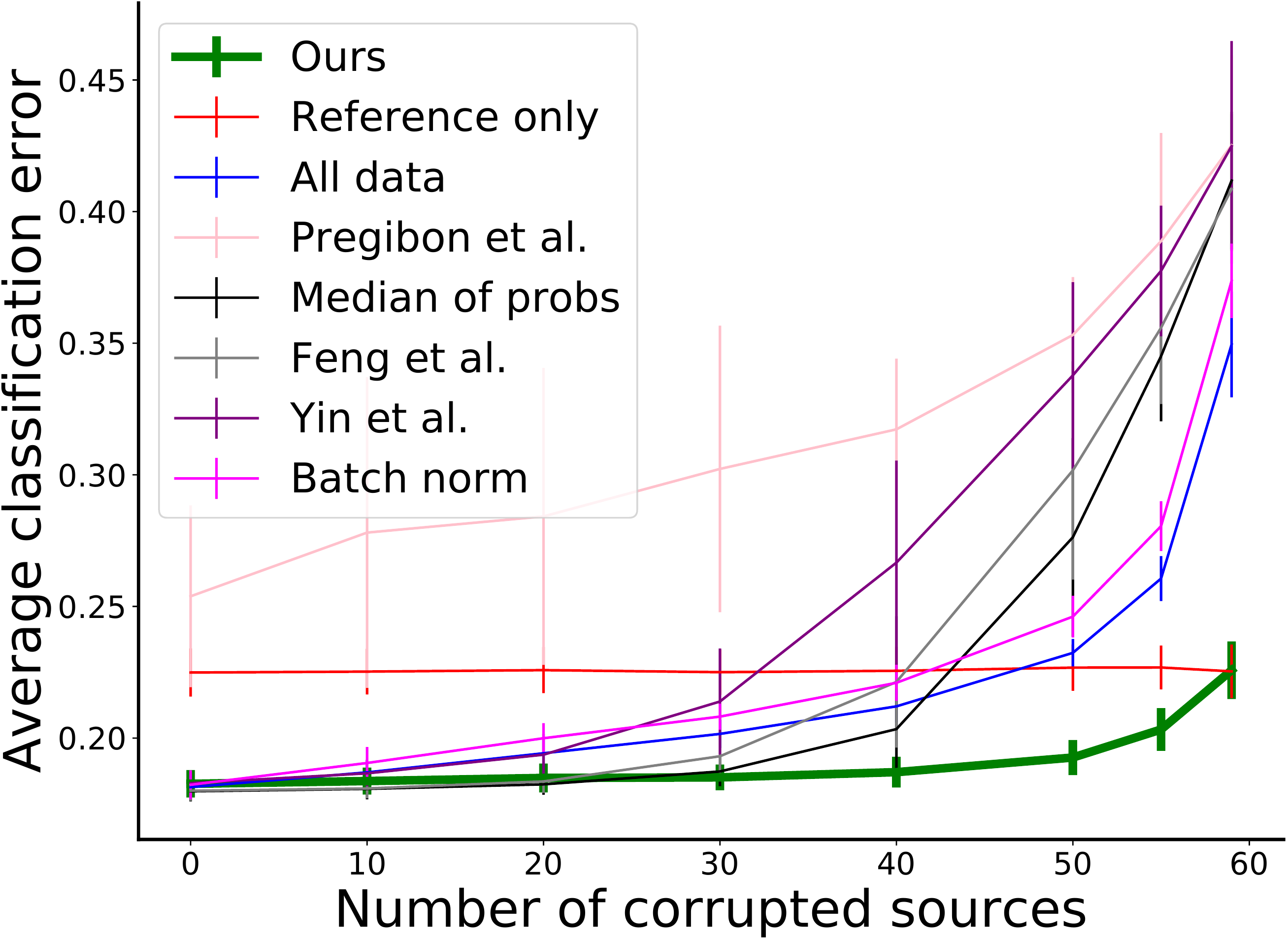}
  \caption{Dead pixels}
  \label{fig:animals_pixels}
\end{subfigure}%
\hfill
\begin{subfigure}{.33\textwidth}
  \centering
  \includegraphics[width=.95\linewidth]{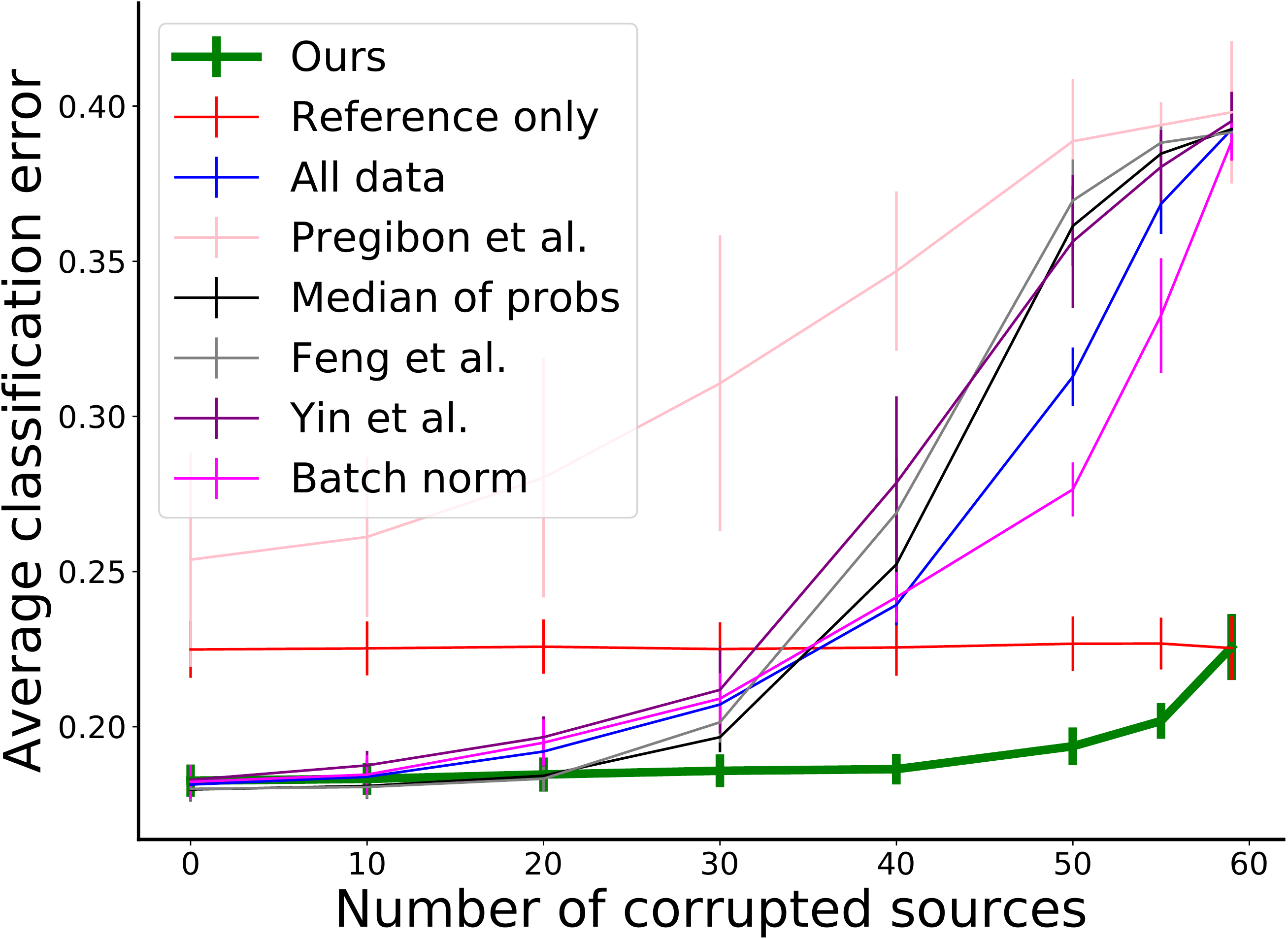}
  \caption{RGB channels swapped}
  \label{fig:animals_RGB}
\end{subfigure}
\caption{Results for the attribute "black" from the Animals with Attributes 2 dataset. Each plot corresponds to a different contamination type. The $x$-axis gives the number $n$ of corrupted sources and the $y$-axis gives the average classification error of the algorithms, achieved over 100 different runs. Error bars correspond to the standard deviation around those means.}
\label{fig:animals_plots}
\end{figure*}

\subsection{Animals with Attributes 2}
\label{sec:experiments_animals} 
The Animals with Attributes 2 dataset \cite{xian2018zero} contains 37322 images of 50 animal classes. The classes are aligned to $85$ binary attributes, \eg color, habitat and others, via a class-attribute binary matrix, indicating whether an animal possesses each feature. This results in a total of $85$ different binary prediction tasks of identifying whether an animal on a given image possesses a certain attribute or not.

Feature representations of the images are obtained via the following procedure. We use a ResNet50 network \cite{he2016deep}, pretrained \footnote{We use a pretrained model from the TensorNets package, \href{https://github.com/taehoonlee/tensornets}{https://github.com/taehoonlee/tensornets}.} on ImageNet \cite{russakovsky2015imagenet}, to obtain feature representations of the ImageNet data and reduce their dimension to 100 by PCA. Finally, for each image in the Animals with Attributes 2 dataset, we compute the ResNet50 feature representation and apply the PCA projection pre-learned on ImageNet.

We perform an independent set of experiments for each attribute and for various types and levels of corruption of the data sources. In each run, we randomly split the data into 60 groups of 500 images, with the remaining 7322 images left out for testing. One of the groups is selected at random as the clean reference dataset available to the learner. The remaining 59 groups correspond to the data sources, some of which provide low-quality or corrupted data. We consider six different types of corruptions. Three act on the labels or the feature representations directly and the next three are synthetic modifications of the images themselves. In the second case, the corresponding images are manipulated before the feature representations are extracted.

\begin{table*}[h!]
\centering\setlength{\tabcolsep}{3.3pt}
 \begin{tabular}{|c || c | c | c | c | c | c | c | c |} 
 \hline
 \backslashbox{Baseline}{$n$} & $n = 0$ & $n = 10$ & $n = 20$ & $n = 30$ & $n = 40$ & $n = 50$ & $n = 55$ & $n = 59$ \\ %[0.5ex] 
 \hline\hline 
Reference only & 84/1/0 & 505/5/0 & 497/13/0 & 487/23/0 & 475/35/0 & 442/68/0 & 325/185/0 & 0/510/0 \\
All data & 0/85/0 & 115/395/0 & 267/243/0 & 370/140/0 & 438/72/0 & 468/42/0 & 479/31/0 & 484/26/0 \\
Med of probs. & 9/76/0 & 47/463/0 & 172/338/0 & 336/174/0 & 469/41/0 & 504/6/0 & 502/8/0 & 499/11/0 \\
Geom.med~\cite{feng2014distributed} & 8/77/0 & 32/478/0 & 110/400/0 & 338/172/0 & 457/53/0 & 504/6/0 & 502/8/0 & 497/13/0 \\
Comp.med~\cite{pmlr-v80-yin18a} & 14/71/0 & 179/331/0 & 390/120/0 & 432/78/0 & 472/38/0 & 502/8/0 & 503/7/0 & 497/13/0 \\
Robust loss~\cite{pregibon1982resistant} & 55/30/0 & 308/202/0 & 361/149/0 & 416/94/0 & 437/73/0 & 455/55/0 & 470/40/0 & 485/25/0 \\
Batch norm & 0/85/0 & 107/403/0 & 317/193/0 & 416/94/0 & 446/63/1 & 478/32/0 & 487/23/0 & 482/28/0 \\
 \hline
 \end{tabular}
 \caption{Summary of the results from the Animals with Attributes 2 experiments, over all 85 prediction tasks and all 6 types of corruption. Given a number of corrupted sources $n$ (columns) and a baseline (rows), we report values in the form A/B/C, where A is the number of times that our method performed significantly better than the corresponding baseline, B is the number of times it performed equally well and C is the number of times it performed significantly worse, summed over the various types of corruptions and all attributes. More details are provided in the main body of the text.}
 \label{table:animals_tables}
\end{table*}

\begin{itemize}
\item Label bias: The labels of all (corrupted) samples are switched to class $1$.
\item Shuffled labels: The labels of all samples are shuffled randomly, separately in each corrupted source.
\item Shuffled features: Given a permutation of the indexes between 1 and 100, the features of all samples are shuffled according to it.
\item Blurred images: Each image is blurred by filtering with a Gaussian kernel with standard deviation $\sigma = 6$.
\item Dead pixels: In each image a random $30\%$ of the pixels are set to pure black or white.
\item RGB channels swapped: The values in the red and the blue color channels of each image are swapped.
\end{itemize}

Given an attribute, a type of corruption and a value of $n\in \{0, 10, 20, 30, 40, 50, 55, 59\}$, the data is split randomly, as described above, and the samples of $n$ randomly chosen sources are corrupted. Our algorithm, as well as all baselines, then learn a model based on the resulting data and the performance of the obtained predictors is evaluated on the test data. For any combination of target attribute, corruption strategy and value of $n$, the experiment is repeated 100 times with a different random seed to obtain error estimates.

The results for the first attribute from the Animals with Attributes 2 data ("black") are given in Figure \ref{fig:animals_plots}. Each plot corresponds to a different type of contamination. The $x$-axis gives the number of sources providing corrupted data and the $y$-axis corresponds to the average error that an algorithm achieved on the test set, over the 100 runs for each experimental setup. The error bars give the standard deviation around this average.

Our algorithm (green) performs at least as well as or strictly better than all baselines, for \textit{any} type of corruption and \textit{any} proportion of corrupted sources. When all sources provide clean data, the performance of our method matches the one of the classic regularized logistic regression approach on \iid data (blue). As the number of corrupted sources increases, the performance of all baselines gradually degrades, while our algorithm is able to leverage the remaining clean data and suppress the effect of the corruptions. The median-based baselines perform reasonably when less than half of the sources are corrupted, but fail for larger proportions. The robust logistic regression baseline performs poorly, again likely due to the non-convexity of the loss function. As all sources become unreliable, our method performs as well as the approach of learning from the reference dataset only, which is indeed optimal since all other data is corrupted.

We summarize the results from all attributes in Table \ref{table:animals_tables}. For any number of corrupted sources $n$ (columns), we compare our method to the performance of each baseline (rows). We report values in the form A/B/C, where A is the number of times that our method performed significantly better than the corresponding baseline, B is the number of times it performed equally well and C is the number of times it performed significantly worse, summed over the various types of corruptions and all attributes. For a fixed type of corruption and attribute, we say that one method performs significantly better than another over the set of 100 runs with this setup, if the difference in the average performance of the two models is larger than the sum of the standard deviations around those means (that is, if the error bars, as in Figure \ref{fig:animals_plots}, do not intersect). 

The results in Table \ref{table:animals_tables} show that our method \textit{performs significantly better than all baselines for many types of corruption and many values of $n$}, especially for high levels of contamination, while \textit{essentially never performing significantly worse than any baseline}. Tables with a more detailed breakdown, depending on the type of corruption, as well as results for lower levels of contamination per source, are deferred to the supplementary material.

\section{Conclusion}
\label{sec:conclusion}
We introduce an algorithm for learning from data provided by multiple untrusted sources. It incorporates information from all of them, while being robust to arbitrary corruptions and manipulations of the data. 
By making use of the grouped structure of the task and a reference dataset, the method is able 
to successfully learn even if more than half of the available data is 
corrupted or uninformative.
Our method is theoretically justified and easy to implement, even in cases when the data is decentralized and/or private. We demonstrated its effectiveness through two sets of extensive experiments, showing its superior performance to all baselines, for various levels and types of corruption.

In our experiments we observed that a relatively small clean dataset was enough to protect the learning from the effects of corrupted data. Quantifying the trade-off between the size of the reference dataset and the gains of our algorithm in terms of achieved test-time performance is thus an interesting and promising direction for future work.

\section*{Acknowledgements}
We thank Dan Alistarh for helpful suggestions regarding the discussion in Section \ref{sec:discussion_privacy}. This work was in parts funded by the European Research Council under the European Union’s Seventh Framework Programme (FP7/2007-2013)/ERC grant agreement no 308036. This project has received funding from the European Union’s Horizon 2020 research and innovation programme under the Marie Skłodowska-Curie Grant Agreement No. 665385.

\bibliography{ms}
\bibliographystyle{icml2019}

\appendix

\addtolength{\abovedisplayskip}{.15\baselineskip}
\addtolength{\belowdisplayskip}{.15\baselineskip}
\addtolength{\abovedisplayshortskip}{1.0\baselineskip}
\addtolength{\belowdisplayshortskip}{1.0\baselineskip}

\clearpage
\section{Proof of Theorem \ref{thm:main_bound}}
\label{app:thm_proof}
First we bound $|\hat{\epsilon}_{\alpha}\left(h\right) - \epsilon_{\alpha}\left(h\right)|$ with high probability and uniformly over $\mathcal{H}$. We adapt the classical proofs of generalization bounds in terms of the Rademacher complexity of a hypothesis class, \eg \cite{bousquet2004introduction}.
\begin{proposition}
\label{prop:uniform_bound}
Given the setup and assumptions described above, for any $\delta > 0$ with probability at least $1 - \delta$ over the data, for any function $h\in\mathcal{H}$:
\begin{equation}
\label{eqn:empirical_vs_true_mean}
\begin{split}
    |\epsilon_{\alpha}\left(h\right) - \hat{\epsilon}_{\alpha} \left(h\right)| & \leq 2\sum_{i=1}^N \alpha_i \mathcal{R}_i \left(\mathcal{H}\right) \\ & + 3 \sqrt{\frac{\log\left(\frac{4}{\delta}\right)M^2}{2}}\sqrt{\sum_{i=1}^N\frac{\alpha_i^2}{m_i}},
\end{split}
\end{equation}
where for each $i = 1, 2, \ldots, N$:
\begin{equation}
\label{eqn:define_rademacher_i}
\mathcal{R}_i \left(\mathcal{H}\right) = \mathbb{E}_{\sigma}\left(\sup_{f\in\mathcal{H}}\left(\frac{1}{m_i}\sum_{j=1}^{m_i}\sigma_{i,j}L(f(x_{i,j}), y_{i,j})\right)\right),
\end{equation}
and where $\sigma_{i,j}$ are independent Rademacher random variables.
\end{proposition}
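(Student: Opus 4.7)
The plan is to adapt the classical Rademacher-complexity proof of uniform convergence (see, e.g., \cite{bousquet2004introduction}) to the weighted $\alpha$-setting. I will bound the one-sided supremum $\Phi(S) = \sup_{h\in\mathcal{H}}(\epsilon_\alpha(h) - \hat{\epsilon}_\alpha(h))$ using three ingredients in sequence: (i) McDiarmid's bounded-differences inequality to concentrate $\Phi(S)$ around its mean; (ii) a symmetrization argument bounding $\mathbb{E}[\Phi(S)]$ by an expected weighted empirical Rademacher average; and (iii) a second McDiarmid application to replace the expected weighted Rademacher complexity by the data-dependent quantity $\sum_i \alpha_i \mathcal{R}_i(\mathcal{H})$ appearing in the statement. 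Repeating the argument for $-\Phi$ and combining via a union bound (splitting the failure probability into four equal parts across the two sides and the two McDiarmid applications) then produces the two-sided bound on $|\epsilon_\alpha(h) - \hat{\epsilon}_\alpha(h)|$ and explains the $\log(4/\delta)$ in the final expression.

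For steps (i) and (iii), the key observation is that since $L \in [0, M]$ and
\[
    \hat{\epsilon}_\alpha(h) = \sum_{i=1}^N \frac{\alpha_i}{m_i}\sum_{j=1}^{m_i} L(h(x_{i,j}), y_{i,j}),
\]
swapping a single sample in source $i$ perturbs $\hat{\epsilon}_\alpha(h)$, and hence both $\Phi(S)$ and $\sum_i \alpha_i \mathcal{R}_i(\mathcal{H})$, by at most $\alpha_i M / m_i$. Summing the squared bounded-differences constants over all samples yields $\sum_i m_i (\alpha_i M/m_i)^2 = M^2 \sum_i \alpha_i^2/m_i$, so each McDiarmid application contributes a deviation of order $\sqrt{\tfrac{M^2 \log(4/\delta)}{2}\sum_i \alpha_i^2/m_i}$. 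Two such terms from (i) and (iii), together with the factor $2$ coming from symmetrization, combine into the stated factor of $3$ in front of the concentration term.

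For (ii), I would introduce an independent ghost sample $S'$ and Rademacher variables $\sigma_{i,j}$, and via Jensen's inequality and the standard iid-induced sign insertion obtain $\mathbb{E}\Phi(S) \le 2\,\mathbb{E}_{S,\sigma}\sup_{h\in\mathcal{H}} \sum_i (\alpha_i/m_i) \sum_j \sigma_{i,j} L(h(x_{i,j}), y_{i,j})$. The step specific to the multi-source setting is then to pull the supremum inside the outer sum using $\sup_h \sum_i \alpha_i f_i(h) \le \sum_i \alpha_i \sup_h f_i(h)$ (valid because $\alpha_i \ge 0$), producing the per-source decomposition $2\sum_i \alpha_i \mathbb{E}_{S_i}[\mathcal{R}_i(\mathcal{H})]$ that matches the definition in (\ref{eqn:define_rademacher_i}). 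The main obstacle, more bookkeeping than conceptual, is tracking how the weights $\alpha_i$ propagate through every step so that the concentration rate is governed by the effective sample size $(\sum_i \alpha_i^2/m_i)^{-1}$ rather than, say, $\sum_i m_i$: the source-specific bounded-differences constants $\alpha_i M/m_i$ supply exactly this effective sample size, and the $\alpha_i \ge 0$ sup-inequality is what yields a per-source decomposition of the Rademacher term rather than a single joint complexity.
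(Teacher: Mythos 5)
Your proposal is correct and follows essentially the same route as the paper's proof: McDiarmid via the per-sample bounded-differences constants $\alpha_i M/m_i$, symmetrization with a ghost sample to get the factor $2$, a second McDiarmid application to pass to the empirical Rademacher averages (yielding the total constant $3$), the $\sup$--sum inequality for $\alpha_i\ge 0$ to obtain the per-source decomposition $\sum_i\alpha_i\mathcal{R}_i(\mathcal{H})$, and a union bound over the two sides with $\delta/4$ splits. The only cosmetic difference is that you decompose into per-source Rademacher terms before the second concentration step while the paper does so afterwards, which does not affect the argument.
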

\begin{proof}
Write:
\begin{equation}
\label{eqn:equation_i_need}
    \epsilon_{\alpha}\left(h\right) \leq \hat{\epsilon}_{\alpha}\left(h\right) + \sup_{f\in \mathcal{H}}\left(\epsilon_{\alpha}\left(f\right)- \hat{\epsilon}_{\alpha}\left(f\right)\right)
\end{equation}
To link the second term to its expectation, we prove the following:
\begin{lemma}
\label{lem:mcdiar_condition}
Define the function $\phi:\left(\mathcal{X}\times\mathcal{Y}\right)^m \rightarrow \mathbb{R}$ by:
\begin{small}
$$ \phi\left(\{x_{1,1}, y_{1,1}\}, \ldots, \{x_{N, m_N}, y_{N, m_N}\}\right) = \sup_{f\in \mathcal{H}}\left(\epsilon_{\alpha}\left(f\right)- \hat{\epsilon}_{\alpha}\left(f\right)\right).$$
\end{small}
Denote for brevity $z_{i,j} = \{x_{i,j}, y_{i,j}\}$. Then, for any $i \in \{1, 2, \ldots, N\}, j \in \{1, 2, \ldots, m_i\}$:
\begin{equation}
\begin{split}
    \sup_{z_{1,1}, \ldots, z_{N, m_N}, z_{i,j}^{'}} & |\phi\left(z_{1,1}, \ldots, z_{i,j}, \ldots, z_{N, m_N}\right) \\ & - \phi\left(z_{1,1}, \ldots, z_{i,j}^{'}, \ldots, z_{N, m_N}\right)| \leq \frac{\alpha_i}{m_i}M
\end{split}
\end{equation}
\end{lemma}
\begin{proof}
Fix any $i, j$ and any $z_{1,1}, \ldots, z_{N, m_N}, z_{i,j}^{'}$. Denote the $\alpha$-weighted empirical average of the loss with respect to the sample $z_{1,1}, \ldots, z_{i,j}^{'}, \ldots, z_{N, m_N}$ by $\epsilon_{\alpha}^{'}$. Then we have that:
\begin{align*}
    |\phi\left(\ldots, z_{i,j}, \ldots\right) & - \phi (\ldots, z_{i,j}^{'}, \ldots )| \\ & =  |\sup_{f\in\mathcal{H}}\left(\epsilon_{\alpha}\left(f\right) - \hat{\epsilon}_{\alpha}\left(f\right)\right) \\ & - \sup_{f\in\mathcal{H}} (\epsilon_{\alpha}\left(f\right) - \hat{\epsilon}_{\alpha}^{'}\left(f\right) )| \\ & \leq |\sup_{f\in\mathcal{H}}(\hat{\epsilon}^{'}_{\alpha}\left(f\right) - \hat{\epsilon}_{\alpha}\left(f\right))| \\ & = \frac{\alpha_i}{m_i}|\sup_{f\in\mathcal{H}}\left(L\left(f(x_{i,j}^{'}), y_{i,j}^{'}\right) - L\left(f(x_{i,j}), y_{i,j}\right)\right)| \\ & \leq \frac{\alpha_i}{m_i}M
\end{align*}
Note: the inequality we used above holds for bounded functions inside the supremum.
\end{proof}
\noindent Let $S$ denote a random sample of size $m$ drawn from a distribution as the one generating out data (\ie $m_i$ samples from $\mathcal{D}_i$ for each $i$). Now, using Lemma \ref{lem:mcdiar_condition}, McDiarmid's inequality gives:
\begin{equation*}
\begin{split}
    \mathbb{P}\left(\phi(S) - \mathbb{E}(\phi(S)) \geq t\right) & \leq \exp\left(-\frac{2t^2}{\sum_{i=1}^N\sum_{j=1}^{m_i}\frac{\alpha_i^2}{m_i^2}M^2} \right) \\ & = \exp\left(-\frac{2t^2}{M^2\sum_{i=1}^N \frac{\alpha_i^2}{m_i}}\right)
\end{split}
\end{equation*}
For any $\delta > 0$, setting the right-hand side above to be $\delta/4$ and using (\ref{eqn:equation_i_need}), we obtain that with probability at least $1-\delta/4$:
\begin{equation}
\begin{split}
    \epsilon_{\alpha}\left(h\right) \leq \hat{\epsilon}_{\alpha}\left(h\right) & + \mathbb{E}_S\left(\sup_{f\in\mathcal{H}}\left(\epsilon_{\alpha} (f) - \hat{\epsilon}_{\alpha}(f)\right)\right) \\ & + \sqrt{\frac{\log\left(\frac{4}{\delta}\right)M^2}{2}}\sqrt{\sum_{i=1}^N\frac{\alpha_i^2}{m_i}}
\end{split}
\end{equation}
To deal with the expected loss inside the second term, introduce a ghost sample (denoted by $S'$), drawn from the same distributions as our original sample (denoted by $S$). Denoting the weighted empirical loss with respect to the ghost sample by $\epsilon_{\alpha}^{'}$, $\beta_i = m_i/m$ for all $i$, and using the convexity of the supremum, we obtain:
\begin{equation*}
\begin{split}
    \mathbb{E}_S & \left(\sup_{f\in\mathcal{H}}\left(\epsilon_{\alpha} (f) - \hat{\epsilon}_{\alpha}(f)\right)\right) \\ & = \mathbb{E}_{S}\left(\sup_{f\in\mathcal{H}}\left(\mathbb{E}_{S'}\left(\hat{\epsilon}_{\alpha}^{'}(f)\right) - \hat{\epsilon}_{\alpha}(f)\right)\right) \\ & \leq \mathbb{E}_{S, S'} \left(\sup_{f\in\mathcal{H}}\left(\hat{\epsilon}_{\alpha}^{'}(f) - \hat{\epsilon}_{\alpha}(f) \right)\right) \\ & = \mathbb{E}_{S, S'}\left(\sup_{f\in\mathcal{H}}\left(\frac{1}{m}\sum_{i=1}^N\sum_{j=1}^{m_i}\frac{\alpha_i}{\beta_i}\left(L(f(x_{i,j}^{'}), y_{i,j}^{'}) \right. \right. \right. \\ &  \left. \left. \left. \quad - L(f(x_{i,j}), y_{i,j}) \vphantom{L^{'}}\right) \vphantom{\frac{1}{m}\sum_{i=1}^N}\right)\right)
\end{split}
\end{equation*}

Introducing $m$ independent Rademacher random variables and noting that $L(f(x^{'}), y^{'}) - L(f(x), y)$ and $\sigma\left(L(f(x^{'}), y^{'}) - L(f(x), y)\right)$ have the same distribution, as long as $\left(x, y\right)$ and $(x^{'},y^{'})$ have the same distribution:
\begin{equation*}
\begin{split}
    \mathbb{E}_S & \left(\sup_{f\in\mathcal{H}}\left(\epsilon_{\alpha} (f) - \hat{\epsilon}_{\alpha}(f)\right)\right) \\ & \leq \mathbb{E}_{S, S', \sigma}\left(\sup_{f\in\mathcal{H}}\left(\frac{1}{m}\sum_{i=1}^N\sum_{j=1}^{m_i}\frac{\alpha_i}{\beta_i}\sigma_{i,j}\left(L(f(x_{i,j}^{'}), y_{i,j}^{'}) \right. \right. \right. \\ & \left. \left. \left. \quad - L(f(x_{i,j}), y_{i,j}) \vphantom{L^{'}}\right) \vphantom{\frac{1}{m}\sum_{i=1}^N}\right)\right) \\ & \leq \mathbb{E}_{S^{'}, \sigma}\left(\sup_{f\in\mathcal{H}}\left(\frac{1}{m}\sum_{i=1}^N\sum_{j=1}^{m_i}\frac{\alpha_{i}}{\beta_{i}}\sigma_{i,j}L(f(x_{i,j}^{'}), y^{'}_{i,j})\right)\right) \\ & + \mathbb{E}_{S, \sigma}\left(\sup_{f\in\mathcal{H}}\left(\frac{1}{m}\sum_{i=1}^N\sum_{j=1}^{m_i}\frac{\alpha_{i}}{\beta_{i}}\left(-\sigma_{i,j}\right)L(f(x_{i,j}), y_{i,j})\right)\right) \\ & = 2\mathbb{E}_{S, \sigma}\left(\sup_{f\in\mathcal{H}}\left(\frac{1}{m}\sum_{i=1}^N\sum_{j=1}^{m_i}\frac{\alpha_{i}}{\beta_{i}}\sigma_{i,j}L(f(x_{i,j}), y_{i,j})\right)\right)
\end{split}
\end{equation*}
We can now link the last term to the empirical analog of the Rademacher complexity, by using the McDiarmid Inequality (with an observation similar to Lemma 1). Putting this together, we obtain that for any $\delta > 0$ with probability at least $1 - \delta/2$:
\begin{equation}
\begin{split}
    \epsilon_{\alpha}\left(h\right) & \leq \hat{\epsilon}_{\alpha} \left(h\right) \\ & + 2\mathbb{E}_{\sigma}\left(\sup_{f\in\mathcal{H}}\left(\frac{1}{m}\sum_{i=1}^N\sum_{j=1}^{m_i}\frac{\alpha_{i}}{\beta_{i}}\sigma_{i,j}L(f(x_{i,j}), y_{i,j})\right)\right) \\ & + 3 \sqrt{\frac{\log\left(\frac{4}{\delta}\right)M^2}{2}}\sqrt{\sum_{i=1}^N\frac{\alpha_i^2}{m_i}}
\end{split}
\end{equation}
Finally, note that:
\begin{align*}
    \mathbb{E}_{\sigma} & \left(\sup_{f\in\mathcal{H}}\left(\frac{1}{m}\sum_{i=1}^N\sum_{j=1}^{m_i}\frac{\alpha_{i}}{\beta_{i}}\sigma_{i,j}L(f(x_{i,j}), y_{i,j})\right)\right) \\ & \leq \mathbb{E}_{\sigma}\left(\sum_{i=1}^{N}\alpha_i\sup_{f\in\mathcal{H}}\left(\frac{1}{m_i}\sum_{j=1}^{m_i}\sigma_{i,j}L(f(x_{i,j}), y_{i,j})\right)\right) \\ & = \sum_{i=1}^N \alpha_i \mathbb{E}_{\sigma}\left(\sup_{f\in\mathcal{H}}\left(\frac{1}{m_i}\sum_{j=1}^{m_i}\sigma_{i,j}L(f(x_{i,j}), y_{i,j})\right)\right) \\ & = \sum_{i=1}^N \alpha_i \mathcal{R}_i \left(\mathcal{H}\right)
\end{align*}
Bounding $\hat{\epsilon}_{\alpha}(h) - \epsilon_{\alpha}(h)$ with the same quantity and with probability at least $1 - \delta/2$ follows by a similar argument. The result then follows by applying the union bound.
\end{proof}

\noindent Now we show:
\mainthm*
\begin{proof}
For any $h\in\mathcal{H}$:
\begin{align*}
    \lvert\epsilon_{\alpha}(h) - \epsilon_{T}(h)\rvert & = \lvert\sum_{i=1}^N \alpha_i \epsilon_i (h) - \epsilon_T (h)\rvert \\ & \leq \sum_{i=1}^N \alpha_i\lvert\epsilon_i (h) - \epsilon_T (h)\rvert \\ & \leq \sum_{i=1}^N \alpha_i d_{\mathcal{H}}\left(\mathcal{D}_i, \mathcal{D}_T\right).
\end{align*}
Now applying this bound twice and using Proposition \ref{prop:uniform_bound}, we get that with probability at least $1-\delta$:
\begin{align*}
    \epsilon_T(\hat{h}_{\alpha}) & \leq \epsilon_{\alpha}(\hat{h}_{\alpha}) + \sum_{i=1}^N \alpha_i d_{\mathcal{H}}\left(\mathcal{D}_i, \mathcal{D}_T\right) \\ & \leq \hat{\epsilon}_{\alpha}(\hat{h}_{\alpha}) + 2\sum_{i=1}^N \alpha_i \mathcal{R}_i \left(\mathcal{H}\right) \\ & + 3 \sqrt{\frac{\log\left(\frac{4}{\delta}\right)M^2}{2}}\sqrt{\sum_{i=1}^N\frac{\alpha_i^2}{m_i}} + \sum_{i=1}^N \alpha_i d_{\mathcal{H}}\left(\mathcal{D}_i, \mathcal{D}_T\right) \\ & \leq \hat{\epsilon}_{\alpha}(h_{T}^{*}) + 2\sum_{i=1}^N \alpha_i \mathcal{R}_i \left(\mathcal{H}\right) \\ & + 3 \sqrt{\frac{\log\left(\frac{4}{\delta}\right)M^2}{2}}\sqrt{\sum_{i=1}^N\frac{\alpha_i^2}{m_i}} + \sum_{i=1}^N \alpha_i d_{\mathcal{H}}\left(\mathcal{D}_i, \mathcal{D}_T\right) \\ & \leq \epsilon_{\alpha}(h_T^*) + 4\sum_{i=1}^N \alpha_i \mathcal{R}_i \left(\mathcal{H}\right) \\ & + 6 \sqrt{\frac{\log\left(\frac{4}{\delta}\right)M^2}{2}}\sqrt{\sum_{i=1}^N\frac{\alpha_i^2}{m_i}} + \sum_{i=1}^N \alpha_i d_{\mathcal{H}}\left(\mathcal{D}_i, \mathcal{D}_T\right) \\ & \leq \epsilon_T (h_T^{*}) + 4\sum_{i=1}^N \alpha_i \mathcal{R}_i \left(\mathcal{H}\right) \\ & + 6 \sqrt{\frac{\log\left(\frac{4}{\delta}\right)M^2}{2}}\sqrt{\sum_{i=1}^N\frac{\alpha_i^2}{m_i}} + 2\sum_{i=1}^N \alpha_i d_{\mathcal{H}}\left(\mathcal{D}_i, \mathcal{D}_T\right)
\end{align*}
\end{proof}

\section{Details about Algorithm \ref{alg:main_algo}}
\label{app:details}
\subsection{Distribution-independent upper bounds on the Rademacher complexity}

Here we give examples of some well-known upper bounds on the Rademacher complexity of certain function classes, which are distribution-independent. Applying such a bound on the Rademacher terms in Theorem \ref{thm:main_bound} will make the dependence of the second term in the bound on the weights disappear. Therefore, we focus on the remaining terms in our algorithm. 

Throughout this section, we discuss the Rademacher complexity of a function class $\mathcal{H}$ with respect to a set of samples $\{x_1, \ldots, x_n\} \sim \mathcal{D}$, defined as:
\begin{equation}
\label{eqn:general_rademacher}
\mathcal{R}\left(\mathcal{H}\right) = \mathbb{E}_{\sigma}\left(\sup_{f\in\mathcal{H}}\left(\frac{1}{n}\sum_{i=1}^{n}\sigma_{i}f(x_{i})\right)\right) 
\end{equation}

In the case of bounded binary linear classifiers $\mathcal{H} = \{x\rightarrow\langle \textbf{w},\textbf{x}\rangle: \|\textbf{w}\|_2 \leq B\}$, acting on a bounded domain $\mathcal{X}$ (\ie for all $x\in\mathcal{X}, \|x\|_2 \leq D$), Lemma 26.10 in \cite{shalev2014understanding} shows that:
$$\mathcal{R}\left(\mathcal{H}\right) \leq \frac{BD}{\sqrt{n}}.$$
More generally, the Rademacher complexity of a set of binary classifiers with a finite VC dimention $h$ can be bounded by \cite{bousquet2004introduction}:
$$\mathcal{R}\left(\mathcal{H}\right) \leq C\sqrt{\frac{h}{n}},$$
for some constant $C$. The Rademacher complexity is also related to another popular complexity measure, the covering number, via Dudley's entropy bound \cite{bousquet2004introduction}:
$$\mathcal{R}\left(\mathcal{H}\right) \leq \frac{C}{\sqrt{n}}\int_{0}^{\infty}\sqrt{\log N\left(\mathcal{H}, t, n\right)} \, dt,$$
where $N\left(\mathcal{H}, t, n\right)$ is the size of the smallest $t$-cover of the space $\mathcal{H}$, under the metric: $$d_n\left(h, h'\right) = \frac{1}{n}|\{h(x_i) \neq h'(x_i): i = 1, \ldots, n\}|.$$
Note that both the VC-dimension and the covering number are distribution-independent measures of complexity, so the corresponding upper bounds do not depend on $\mathcal{D}$ as well.
\subsection{Computing the empirical discrepancies}
As explained in Section \ref{sec:algorithm}, the discrepancies:
\begin{align}
d_{\mathcal{H}}\left(\mathcal{D}_i, \mathcal{D}_T\right) = \sup_{h\in\mathcal{H}}\left(|\epsilon_i (h) - \epsilon_T (h)|\right).
\end{align}
are unknown in practice and therefore need to be estimated from their empirical counterparts:
\begin{equation}
\begin{split}
d_{\mathcal{H}}\left(S_i, S_T\right) & = \sup_{h\in\mathcal{H}}\left(|\hat{\epsilon}_i (h) - \hat{\epsilon}_T (h)|\right) \\ & = \sup_{h\in\mathcal{H}}(|\frac{1}{m_i}\sum_{j=1}^{m_i} L\left(h\left(x_{i,j}\right), y_{i,j}\right) \\ & \quad \quad \text{  } - \frac{1}{m_T} \sum_{j=1}^{m_T} L\left(h\left(x_{T, j}\right), y_{T, j}\right)|).
\end{split}
\end{equation} 
Here, we explain how these are computed in our experiments. Notice that for the $0/1$-loss, a symmetric hypothesis class $\mathcal{H}$ and whenever $\mathcal{Y} = \{-1, +1\}$, we have:
\begin{small}
\begin{equation}
\label{eqn:flipped_labels}
\begin{split}
& d_{\mathcal{H}}\left(S_i, S_T\right) \\ & = \sup_{h\in\mathcal{H}}|\frac{1}{m_i}\sum_{j=1}^{m_i} \mathds{1}_{\{h(x_{i,j})y_{i,j}<0\}} - \frac{1}{m_T} \sum_{j=1}^{m_T} \mathds{1}_{\{h(x_{T,j})y_{T,j}<0\}}| \\ & = \sup_{h\in\mathcal{H}}\left(\frac{1}{m_i}\sum_{j=1}^{m_i} \mathds{1}_{\{h(x_{i,j})y_{i,j}<0\}} - \frac{1}{m_T} \sum_{j=1}^{m_T} \mathds{1}_{\{h(x_{T,j})y_{T,j}<0\}}\right) \\ & =  \sup_{h\in\mathcal{H}}\left(1 - (\frac{1}{m_i}\sum_{j=1}^{m_i} \mathds{1}_{\{h(x_{i,j})\bar{y}_{i,j}<0\}} + \frac{1}{m_T}\sum_{j=1}^{m_T} \mathds{1}_{\{h(x_{T,j})y_{T,j}<0\}})\right) \\ & = 1 - \inf_{h\in\mathcal{H}} \left(\frac{1}{m_i}\sum_{j=1}^{m_i} \mathds{1}_{\{h(x_{i,j})\bar{y}_{i,j}<0\}} + \frac{1}{m_T}\sum_{j=1}^{m_T} \mathds{1}_{\{h(x_{T,j})y_{T,j}<0\}}\right),
\end{split}
\end{equation}
\end{small} 
where $\bar{y}_{i,j} = 1 - y_{i,j}$ is the flipped label of the $j$-th data point from the $i$-th source. Now notice that computing the infimum in equation (\ref{eqn:flipped_labels}) is equivalent to solving a (weighted) empirical risk minimization problem with the input data from the source and the target merged and the labels being the flipped labels from the source and the actual labels from the target. 

Therefore, computing the empirical discrepancies is equivalent to solving an empirical risk minimization problem and standard convex upper bounds can be applied to make the problem tractable. In our experiments, we solve the ERM problem by using square loss.

\clearpage
\section{Additional results from experiments}
\label{app:experiments}
Over the next pages we present more detailed results from the experiments on the Animals with Attributes 2 dataset. The table from the main body of the paper (Table 1) is split according to the type of data corruption. In addition, we performed experiments in which a proportion $p$ of the samples in the $n$ corrupted sources are modified (instead of all of them). Apart from $p = 1$, we experimented with $p = 0.5$ and $p = 0.2$. We present the same type of results for these cases, together with a more detailed breakdown, depending on the type of corruption.

\begin{table*}
\centering
\caption{Summary of the results for $p = 1$, over all 85 prediction tasks and all corruptions (same as Table 1).}
 \label{table:animals_p_large_appendix}
 \begin{tabular}{|c || c || c | c | c | c | c | c || c |} 
 \hline
 \backslashbox{Baseline}{$n$} & $n = 0$ & $n = 10$ & $n = 20$ & $n = 30$ & $n = 40$ & $n = 50$ & $n = 55$ & $n = 59$ \\ %[0.5ex] 
 \hline\hline 
Reference only & 84/1/0 & 505/5/0 & 497/13/0 & 487/23/0 & 475/35/0 & 442/68/0 & 325/185/0 & 0/510/0 \\
All data & 0/85/0 & 115/395/0 & 267/243/0 & 370/140/0 & 438/72/0 & 468/42/0 & 479/31/0 & 484/26/0 \\
Median of probs. & 9/76/0 & 47/463/0 & 172/338/0 & 336/174/0 & 469/41/0 & 504/6/0 & 502/8/0 & 499/11/0 \\
\cite{feng2014distributed} & 8/77/0 & 32/478/0 & 110/400/0 & 338/172/0 & 457/53/0 & 504/6/0 & 502/8/0 & 497/13/0 \\
\cite{pmlr-v80-yin18a} & 14/71/0 & 179/331/0 & 390/120/0 & 432/78/0 & 472/38/0 & 502/8/0 & 503/7/0 & 497/13/0 \\
\cite{pregibon1982resistant} & 55/30/0 & 308/202/0 & 361/149/0 & 416/94/0 & 437/73/0 & 455/55/0 & 470/40/0 & 485/25/0 \\
Batch norm & 0/85/0 & 107/403/0 & 317/193/0 & 416/94/0 & 446/63/1 & 478/32/0 & 487/23/0 & 482/28/0 \\
 \hline
 \end{tabular}
\end{table*}

\begin{table*}
\caption{Summary of results for $p = 1$, split by the type of data corruption}
\begin{subtable}{1\textwidth}
\centering
\caption{Summary of the results for $p = 1$ and label bias, over all 85 prediction tasks}
 \label{table:animals_p_large_enforce_label}
 \begin{tabular}{|c || c | c | c | c | c | c || c |} 
 \hline
 \backslashbox{Baseline}{$n$} & $n = 10$ & $n = 20$ & $n = 30$ & $n = 40$ & $n = 50$ & $n = 55$ & $n = 59$ \\ %[0.5ex] 
 \hline\hline 
Reference only & 85/0/0 & 84/1/0 & 82/3/0 & 80/5/0 & 76/9/0 & 55/30/0 & 0/85/0 \\
All data & 61/24/0 & 82/3/0 & 85/0/0 & 85/0/0 & 85/0/0 & 85/0/0 & 84/1/0 \\
Median of probs. & 23/62/0 & 81/4/0 & 85/0/0 & 85/0/0 & 85/0/0 & 85/0/0 & 84/1/0 \\
\cite{feng2014distributed} & 4/81/0 & 19/66/0 & 85/0/0 & 85/0/0 & 85/0/0 & 85/0/0 & 84/1/0 \\
\cite{pmlr-v80-yin18a} & 50/35/0 & 85/0/0 & 84/1/0 & 85/0/0 & 85/0/0 & 85/0/0 & 84/1/0 \\
\cite{pregibon1982resistant} & 51/34/0 & 64/21/0 & 84/1/0 & 84/1/0 & 84/1/0 & 83/2/0 & 83/2/0 \\
Batch norm & 53/32/0 & 81/4/0 & 85/0/0 & 85/0/0 & 85/0/0 & 85/0/0 & 84/1/0 \\
 \hline
 \end{tabular}
 
\end{subtable}
\hfill
\begin{subtable}{1\textwidth}
\centering
 \caption{Summary of the results for $p = 1$ and shuffled labels, over all 85 prediction tasks}
 \label{table:animals_p_large_shuffle_labels}
 \begin{tabular}{|c || c | c | c | c | c | c || c |} 
 \hline
 \backslashbox{Baseline}{$n$} & $n = 10$ & $n = 20$ & $n = 30$ & $n = 40$ & $n = 50$ & $n = 55$ & $n = 59$ \\ %[0.5ex] 
 \hline\hline 
Reference only & 83/2/0 & 77/8/0 & 71/14/0 & 65/20/0 & 56/29/0 & 38/47/0 & 0/85/0 \\
All data & 4/81/0 & 51/34/0 & 69/16/0 & 74/11/0 & 82/3/0 & 79/6/0 & 79/6/0 \\
Median of probs. & 2/83/0 & 24/61/0 & 63/22/0 & 83/2/0 & 80/5/0 & 79/6/0 & 80/5/0 \\
\cite{feng2014distributed} & 3/82/0 & 0/85/0 & 51/34/0 & 77/8/0 & 80/5/0 & 79/6/0 & 80/5/0 \\
\cite{pmlr-v80-yin18a} & 32/53/0 & 63/22/0 & 62/23/0 & 76/9/0 & 80/5/0 & 79/6/0 & 80/5/0 \\
\cite{pregibon1982resistant} & 57/28/0 & 63/22/0 & 70/15/0 & 71/14/0 & 75/10/0 & 73/12/0 & 71/14/0 \\
Batch norm & 3/82/0 & 41/44/0 & 60/25/0 & 63/21/1 & 79/6/0 & 79/6/0 & 79/6/0 \\
 \hline
 \end{tabular}
\end{subtable}
\hfill
\begin{subtable}{1\textwidth}
\centering
 \caption{Summary of the results for $p = 1$ and shuffled features, over all 85 prediction tasks}
 \label{table:animals_p_large_shuffle_features}
 \begin{tabular}{|c || c | c | c | c | c | c || c |} 
 \hline
 \backslashbox{Baseline}{$n$} & $n = 10$ & $n = 20$ & $n = 30$ & $n = 40$ & $n = 50$ & $n = 55$ & $n = 59$ \\ %[0.5ex] 
 \hline\hline 
Reference only & 84/1/0 & 85/0/0 & 83/2/0 & 81/4/0 & 78/7/0 & 62/23/0 & 0/85/0 \\
All data & 39/46/0 & 60/25/0 & 68/17/0 & 73/12/0 & 77/8/0 & 80/5/0 & 85/0/0 \\
Median of probs. & 6/79/0 & 30/55/0 & 84/1/0 & 85/0/0 & 85/0/0 & 85/0/0 & 85/0/0 \\
\cite{feng2014distributed} & 10/75/0 & 72/13/0 & 84/1/0 & 85/0/0 & 85/0/0 & 85/0/0 & 85/0/0 \\
\cite{pmlr-v80-yin18a} & 18/67/0 & 76/9/0 & 84/1/0 & 85/0/0 & 85/0/0 & 85/0/0 & 85/0/0 \\
\cite{pregibon1982resistant} & 56/29/0 & 58/27/0 & 67/18/0 & 74/11/0 & 78/7/0 & 85/0/0 & 85/0/0 \\
Batch norm & 40/45/0 & 66/19/0 & 72/13/0 & 77/8/0 & 78/7/0 & 79/6/0 & 80/5/0 \\
 \hline
 \end{tabular}
\end{subtable}
\end{table*}

\pagebreak

\begin{table*}
\ContinuedFloat
\begin{subtable}{1\textwidth}
\centering
 \caption{Summary of the results for $p = 1$ and blurred images, over all 85 prediction tasks}
 \label{table:animals_p_large_blured}
 \begin{tabular}{|c || c | c | c | c | c | c || c |} 
 \hline
 \backslashbox{Baseline}{$n$} & $n = 10$ & $n = 20$ & $n = 30$ & $n = 40$ & $n = 50$ & $n = 55$ & $n = 59$ \\ %[0.5ex] 
 \hline\hline 
Reference only & 84/1/0 & 84/1/0 & 83/2/0 & 82/3/0 & 77/8/0 & 59/26/0 & 0/85/0 \\
All data & 0/85/0 & 2/83/0 & 26/59/0 & 53/32/0 & 66/19/0 & 75/10/0 & 78/7/0 \\
Median of probs. & 5/80/0 & 6/79/0 & 19/66/0 & 72/13/0 & 85/0/0 & 85/0/0 & 85/0/0 \\
\cite{feng2014distributed} & 5/80/0 & 6/79/0 & 30/55/0 & 70/15/0 & 85/0/0 & 85/0/0 & 84/1/0 \\
\cite{pmlr-v80-yin18a} & 26/59/0 & 47/38/0 & 61/24/0 & 73/12/0 & 84/1/0 & 85/0/0 & 84/1/0 \\
\cite{pregibon1982resistant} & 61/24/0 & 71/14/0 & 75/10/0 & 81/4/0 & 83/2/0 & 85/0/0 & 85/0/0 \\
Batch norm & 8/77/0 & 44/41/0 & 65/20/0 & 72/13/0 & 78/7/0 & 82/3/0 & 81/4/0 \\
 \hline
 \end{tabular}
\end{subtable}
\hfill
\begin{subtable}{1\textwidth}
\centering
 \caption{Summary of the results for $p = 1$ and dead pixels, over all 85 prediction tasks}
 \label{table:animals_p_large_pixels}
 \begin{tabular}{|c || c | c | c | c | c | c || c |} 
 \hline
 \backslashbox{Baseline}{$n$} & $n = 10$ & $n = 20$ & $n = 30$ & $n = 40$ & $n = 50$ & $n = 55$ & $n = 59$ \\ %[0.5ex] 
 \hline\hline 
Reference only & 85/0/0 & 83/2/0 & 84/1/0 & 84/1/0 & 77/8/0 & 58/27/0 & 0/85/0 \\
All data & 0/85/0 & 12/73/0 & 44/41/0 & 70/15/0 & 74/11/0 & 77/8/0 & 78/7/0 \\
Median of probs. & 6/79/0 & 6/79/0 & 14/71/0 & 61/24/0 & 85/0/0 & 85/0/0 & 85/0/0 \\
\cite{feng2014distributed} & 6/79/0 & 6/79/0 & 25/60/0 & 58/27/0 & 85/0/0 & 85/0/0 & 84/1/0 \\
\cite{pmlr-v80-yin18a} & 23/62/0 & 51/34/0 & 68/17/0 & 70/15/0 & 84/1/0 & 85/0/0 & 84/1/0 \\
\cite{pregibon1982resistant} & 28/57/0 & 38/47/0 & 51/34/0 & 52/33/0 & 56/29/0 & 69/16/0 & 85/0/0 \\
Batch norm & 1/84/0 & 28/57/0 & 59/26/0 & 69/16/0 & 74/11/0 & 79/6/0 & 79/6/0 \\
 \hline
 \end{tabular}
\end{subtable}
\hfill
\begin{subtable}{1\textwidth}
\centering
 \caption{Summary of the results for $p = 1$ and RGB channels swapped, over all 85 prediction tasks}
 \label{table:animals_p_large_RGB}
 \begin{tabular}{|c || c | c | c | c | c | c || c |} 
 \hline
 \backslashbox{Baseline}{$n$} & $n = 10$ & $n = 20$ & $n = 30$ & $n = 40$ & $n = 50$ & $n = 55$ & $n = 59$ \\ %[0.5ex] 
 \hline\hline 
Reference only & 84/1/0 & 84/1/0 & 84/1/0 & 83/2/0 & 78/7/0 & 53/32/0 & 0/85/0 \\
All data & 11/74/0 & 60/25/0 & 78/7/0 & 83/2/0 & 84/1/0 & 83/2/0 & 80/5/0 \\
Median of probs. & 5/80/0 & 25/60/0 & 71/14/0 & 83/2/0 & 84/1/0 & 83/2/0 & 80/5/0 \\
\cite{feng2014distributed} & 4/81/0 & 7/78/0 & 63/22/0 & 82/3/0 & 84/1/0 & 83/2/0 & 80/5/0 \\
\cite{pmlr-v80-yin18a} & 30/55/0 & 68/17/0 & 73/12/0 & 83/2/0 & 84/1/0 & 84/1/0 & 80/5/0 \\
\cite{pregibon1982resistant} & 55/30/0 & 67/18/0 & 69/16/0 & 75/10/0 & 79/6/0 & 75/10/0 & 76/9/0 \\
Batch norm & 2/83/0 & 57/28/0 & 75/10/0 & 80/5/0 & 84/1/0 & 83/2/0 & 79/6/0 \\
 \hline
 \end{tabular}
\end{subtable}
\end{table*}

%%%%%%%%%%%%%%%%%%%%%%%%%%%%%%%%%%%%%%%%%%%%%%%%%%%%%%%%%%%%%%%%%%%%%%%%%%%%%
% p = 0.5
%%%%%%%%%%%%%%%%%%%%%%%%%%%%%%%%%%%%%%%%%%%%%%%%%%%%%%%%%%%%%%%%%%%%%%%%%%%%%%%%

\begin{table*}[h!]
\centering
\caption{Summary of the results for $p = 0.5$, over all 85 prediction tasks and all corruptions.}
 \label{table:animals_p_half_appendix}
 \begin{tabular}{|c || c || c | c | c | c | c | c || c |} 
 \hline
 \backslashbox{Baseline}{$n$} & $n = 0$ & $n = 10$ & $n = 20$ & $n = 30$ & $n = 40$ & $n = 50$ & $n = 55$ & $n = 59$ \\ %[0.5ex] 
 \hline\hline 
Reference only & 84/1/0 & 508/2/0 & 501/9/0 & 488/22/0 & 471/39/0 & 424/86/0 & 303/207/0 & 156/354/0 \\
All data & 0/85/0 & 0/510/0 & 82/428/0 & 158/352/0 & 215/295/0 & 241/269/0 & 223/287/0 & 168/342/0 \\
Median of probs. & 9/76/0 & 30/480/0 & 53/457/0 & 93/417/0 & 189/321/0 & 272/238/0 & 252/258/0 & 216/294/0 \\
\cite{feng2014distributed} & 8/77/0 & 28/482/0 & 19/491/0 & 84/426/0 & 172/338/0 & 254/256/0 & 253/257/0 & 217/293/0 \\
\cite{pmlr-v80-yin18a} & 14/71/0 & 123/387/0 & 227/283/0 & 155/355/0 & 247/259/4 & 295/215/0 & 282/228/0 & 224/286/0 \\
\cite{pregibon1982resistant} & 55/30/0 & 287/223/0 & 282/228/0 & 329/181/0 & 350/160/0 & 358/152/0 & 374/136/0 & 367/143/0 \\
Batch norm & 0/85/0 & 2/508/0 & 78/432/0 & 139/370/1 & 183/326/1 & 186/323/1 & 155/354/1 & 97/412/1 \\
 \hline
 \end{tabular}
\end{table*}

\begin{table*}[h!]
\caption{Summary of results for $p = 0.5$, split by the type of data corruption}
\begin{subtable}{1\textwidth}
\centering
\caption{Summary of the results for $p = 0.5$ and label bias, over all 85 prediction tasks}
 \label{table:animals_p_half_enforce_label}
 \begin{tabular}{|c || c | c | c | c | c | c || c |} 
 \hline
 \backslashbox{Baseline}{$n$} & $n = 10$ & $n = 20$ & $n = 30$ & $n = 40$ & $n = 50$ & $n = 55$ & $n = 59$ \\ %[0.5ex] 
 \hline\hline 
Reference only & 85/0/0 & 84/1/0 & 82/3/0 & 79/6/0 & 66/19/0 & 39/46/0 & 0/85/0 \\
All data & 0/85/0 & 52/33/0 & 73/12/0 & 82/3/0 & 84/1/0 & 84/1/0 & 84/1/0 \\
Median of probs. & 9/76/0 & 41/44/0 & 72/13/0 & 80/5/0 & 84/1/0 & 84/1/0 & 84/1/0 \\
\cite{feng2014distributed} & 5/80/0 & 7/78/0 & 62/23/0 & 76/9/0 & 83/2/0 & 84/1/0 & 84/1/0 \\
\cite{pmlr-v80-yin18a} & 27/58/0 & 57/28/0 & 48/37/0 & 79/6/0 & 84/1/0 & 83/2/0 & 83/2/0 \\
\cite{pregibon1982resistant} & 48/37/0 & 49/36/0 & 58/27/0 & 65/20/0 & 70/15/0 & 79/6/0 & 84/1/0 \\
Batch norm & 1/84/0 & 46/39/0 & 69/16/0 & 76/9/0 & 79/6/0 & 76/9/0 & 76/9/0 \\
 \hline
 \end{tabular}
 
\end{subtable}
\hfill
\begin{subtable}{1\textwidth}
\centering
 \caption{Summary of the results for $p = 0.5$ and shuffled labels, over all 85 prediction tasks}
 \label{table:animals_p_half_shuffle_labels}
 \begin{tabular}{|c || c | c | c | c | c | c || c |} 
 \hline
 \backslashbox{Baseline}{$n$} & $n = 10$ & $n = 20$ & $n = 30$ & $n = 40$ & $n = 50$ & $n = 55$ & $n = 59$ \\ %[0.5ex] 
 \hline\hline 
Reference only & 84/1/0 & 82/3/0 & 72/13/0 & 59/26/0 & 42/43/0 & 28/57/0 & 17/68/0 \\
All data & 0/85/0 & 0/85/0 & 9/76/0 & 31/54/0 & 47/38/0 & 50/35/0 & 49/36/0 \\
Median of probs. & 3/82/0 & 0/85/0 & 0/85/0 & 13/72/0 & 40/45/0 & 47/38/0 & 49/36/0 \\
\cite{feng2014distributed} & 4/81/0 & 0/85/0 & 0/85/0 & 4/81/0 & 29/56/0 & 44/41/0 & 48/37/0 \\
\cite{pmlr-v80-yin18a} & 25/60/0 & 44/41/0 & 17/68/0 & 14/67/4 & 19/66/0 & 28/57/0 & 39/46/0 \\
\cite{pregibon1982resistant} & 56/29/0 & 44/41/0 & 59/26/0 & 59/26/0 & 60/25/0 & 67/18/0 & 64/21/0 \\
Batch norm & 0/85/0 & 0/85/0 & 1/83/1 & 10/74/1 & 4/80/1 & 2/82/1 & 2/82/1 \\
 \hline
 \end{tabular}
\end{subtable}
\hfill
\begin{subtable}{1\textwidth}
\centering
 \caption{Summary of the results for $p = 0.5$ and shuffled features, over all 85 prediction tasks}
 \label{table:animals_p_half_shuffle_inputs}
 \begin{tabular}{|c || c | c | c | c | c | c || c |} 
 \hline
 \backslashbox{Baseline}{$n$} & $n = 10$ & $n = 20$ & $n = 30$ & $n = 40$ & $n = 50$ & $n = 55$ & $n = 59$ \\ %[0.5ex] 
 \hline\hline 
Reference only & 84/1/0 & 85/0/0 & 83/2/0 & 82/3/0 & 76/9/0 & 44/41/0 & 0/85/0 \\
All data & 0/85/0 & 26/59/0 & 51/34/0 & 57/28/0 & 57/28/0 & 41/44/0 & 0/85/0 \\
Median of probs. & 6/79/0 & 4/81/0 & 10/75/0 & 56/29/0 & 69/16/0 & 50/35/0 & 6/79/0 \\
\cite{feng2014distributed} & 6/79/0 & 4/81/0 & 14/71/0 & 61/24/0 & 72/13/0 & 56/29/0 & 6/79/0 \\
\cite{pmlr-v80-yin18a} & 9/76/0 & 18/67/0 & 49/36/0 & 73/12/0 & 77/8/0 & 67/18/0 & 6/79/0 \\
\cite{pregibon1982resistant} & 49/36/0 & 48/37/0 & 54/31/0 & 60/25/0 & 60/25/0 & 56/29/0 & 50/35/0 \\
Batch norm & 1/84/0 & 32/53/0 & 53/32/0 & 60/25/0 & 61/24/0 & 55/30/0 & 14/71/0 \\
 \hline
 \end{tabular}
\end{subtable}
\end{table*}

\pagebreak

\begin{table*}
\ContinuedFloat
\begin{subtable}{1\textwidth}
\centering
 \caption{Summary of the results for $p = 0.5$ and blurred images, over all 85 prediction tasks}
 \label{table:animals_p_half_blured}
 \begin{tabular}{|c || c | c | c | c | c | c || c |} 
 \hline
 \backslashbox{Baseline}{$n$} & $n = 10$ & $n = 20$ & $n = 30$ & $n = 40$ & $n = 50$ & $n = 55$ & $n = 59$ \\ %[0.5ex] 
 \hline\hline 
Reference only & 85/0/0 & 83/2/0 & 83/2/0 & 83/2/0 & 82/3/0 & 75/10/0 & 67/18/0 \\
All data & 0/85/0 & 0/85/0 & 0/85/0 & 0/85/0 & 0/85/0 & 0/85/0 & 0/85/0 \\
Median of probs. & 4/81/0 & 3/82/0 & 4/81/0 & 6/79/0 & 17/68/0 & 14/71/0 & 24/61/0 \\
\cite{feng2014distributed} & 4/81/0 & 3/82/0 & 3/82/0 & 5/80/0 & 16/69/0 & 16/69/0 & 26/59/0 \\
\cite{pmlr-v80-yin18a} & 19/66/0 & 31/54/0 & 10/75/0 & 29/56/0 & 34/51/0 & 32/53/0 & 38/47/0 \\
\cite{pregibon1982resistant} & 58/27/0 & 58/27/0 & 63/22/0 & 67/18/0 & 70/15/0 & 72/13/0 & 73/12/0 \\
Batch norm & 0/85/0 & 0/85/0 & 0/85/0 & 0/85/0 & 0/85/0 & 0/85/0 & 0/85/0 \\
 \hline
 \end{tabular}
\end{subtable}
\hfill
\begin{subtable}{1\textwidth}
\centering
 \caption{Summary of the results for $p = 0.5$ and dead pixels, over all 85 prediction tasks}
 \label{table:animals_p_half_pixels}
 \begin{tabular}{|c || c | c | c | c | c | c || c |} 
 \hline
 \backslashbox{Baseline}{$n$} & $n = 10$ & $n = 20$ & $n = 30$ & $n = 40$ & $n = 50$ & $n = 55$ & $n = 59$ \\ %[0.5ex] 
 \hline\hline 
Reference only & 85/0/0 & 83/2/0 & 84/1/0 & 84/1/0 & 80/5/0 & 70/15/0 & 62/23/0 \\
All data & 0/85/0 & 0/85/0 & 0/85/0 & 0/85/0 & 0/85/0 & 0/85/0 & 0/85/0 \\
Median of probs. & 4/81/0 & 3/82/0 & 4/81/0 & 4/81/0 & 10/75/0 & 9/76/0 & 18/67/0 \\
\cite{feng2014distributed} & 4/81/0 & 3/82/0 & 3/82/0 & 4/81/0 & 9/76/0 & 10/75/0 & 16/69/0 \\
\cite{pmlr-v80-yin18a} & 20/65/0 & 32/53/0 & 11/74/0 & 21/64/0 & 29/56/0 & 20/65/0 & 21/64/0 \\
\cite{pregibon1982resistant} & 25/60/0 & 24/61/0 & 37/48/0 & 37/48/0 & 38/47/0 & 40/45/0 & 37/48/0 \\
Batch norm & 0/85/0 & 0/85/0 & 0/85/0 & 2/83/0 & 4/81/0 & 4/81/0 & 1/84/0 \\
 \hline
 \end{tabular}
\end{subtable}
\hfill
\begin{subtable}{1\textwidth}
\centering
 \caption{Summary of the results for $p = 0.5$ and RGB channels swapped, over all 85 prediction tasks}
 \label{table:animals_p_half_RGB}
 \begin{tabular}{|c || c | c | c | c | c | c || c |} 
 \hline
 \backslashbox{Baseline}{$n$} & $n = 10$ & $n = 20$ & $n = 30$ & $n = 40$ & $n = 50$ & $n = 55$ & $n = 59$ \\ %[0.5ex] 
 \hline\hline 
Reference only & 85/0/0 & 84/1/0 & 84/1/0 & 84/1/0 & 78/7/0 & 47/38/0 & 10/75/0 \\
All data & 0/85/0 & 4/81/0 & 25/60/0 & 45/40/0 & 53/32/0 & 48/37/0 & 35/50/0 \\
Median of probs. & 4/81/0 & 2/83/0 & 3/82/0 & 30/55/0 & 52/33/0 & 48/37/0 & 35/50/0 \\
\cite{feng2014distributed} & 5/80/0 & 2/83/0 & 2/83/0 & 22/63/0 & 45/40/0 & 43/42/0 & 37/48/0 \\
\cite{pmlr-v80-yin18a} & 23/62/0 & 45/40/0 & 20/65/0 & 31/54/0 & 52/33/0 & 52/33/0 & 37/48/0 \\
\cite{pregibon1982resistant} & 51/34/0 & 59/26/0 & 58/27/0 & 62/23/0 & 60/25/0 & 60/25/0 & 59/26/0 \\
Batch norm & 0/85/0 & 0/85/0 & 16/69/0 & 35/50/0 & 38/47/0 & 18/67/0 & 4/81/0 \\
 \hline
 \end{tabular}
\end{subtable}
\end{table*}

%%%%%%%%%%%%%%%%%%%%%%%%%%%%%%%%%%%%%%%%%%%%%%%%%%%%%%%%%%%%%%
%%% p = 0.2
%%%%%%%%%%%%%%%%%%%%%%%%%%%%%%%%%%%%%%%%%%%%%%%%%%%%%%%%%%%%%%

\begin{table*}[h!]
\centering
\caption{Summary of the results for $p = 0.2$, over all 85 prediction tasks and all corruptions.}
 \label{table:animals_p_small_appendix}
 \begin{tabular}{|c || c || c | c | c | c | c | c || c |} 
 \hline
 \backslashbox{Baseline}{$n$} & $n = 0$ & $n = 10$ & $n = 20$ & $n = 30$ & $n = 40$ & $n = 50$ & $n = 55$ & $n = 59$ \\ %[0.5ex] 
 \hline\hline 
Reference only & 84/1/0 & 507/3/0 & 505/5/0 & 504/6/0 & 492/18/0 & 459/51/0 & 429/81/0 & 404/106/0 \\
All data & 0/85/0 & 0/510/0 & 0/510/0 & 0/510/0 & 0/510/0 & 1/509/0 & 2/508/0 & 1/509/0 \\
Median of probs. & 9/76/0 & 28/482/0 & 21/489/0 & 16/494/0 & 17/493/0 & 28/482/0 & 30/478/2 & 31/479/0 \\
\cite{feng2014distributed} & 8/77/0 & 30/480/0 & 24/486/0 & 16/494/0 & 16/494/0 & 20/489/1 & 23/485/2 & 26/484/0 \\
\cite{pmlr-v80-yin18a} & 14/71/0 & 95/415/0 & 146/364/0 & 34/476/0 & 42/468/0 & 39/471/0 & 36/474/0 & 40/470/0 \\
\cite{pregibon1982resistant} & 55/30/0 & 282/228/0 & 282/228/0 & 275/235/0 & 287/223/0 & 264/246/0 & 281/229/0 & 267/243/0 \\
Batch norm & 0/85/0 & 0/510/0 & 0/510/0 & 0/510/0 & 0/510/0 & 0/509/1 & 0/509/1 & 1/508/1 \\
 \hline
 \end{tabular}
\end{table*}

\begin{table*}[h!]
\caption{Summary of results for $p = 0.2$, split by the type of data corruption}
\begin{subtable}{1\textwidth}
\centering
\caption{Summary of the results for $p = 0.2$ and label bias, over all 85 prediction tasks}
 \label{table:animals_p_small_enforce_label}
 \begin{tabular}{|c || c | c | c | c | c | c || c |} 
 \hline
 \backslashbox{Baseline}{$n$} & $n = 10$ & $n = 20$ & $n = 30$ & $n = 40$ & $n = 50$ & $n = 55$ & $n = 59$ \\ %[0.5ex] 
 \hline\hline 
Reference only & 85/0/0 & 84/1/0 & 84/1/0 & 76/9/0 & 60/25/0 & 46/39/0 & 28/57/0 \\
All data & 0/85/0 & 0/85/0 & 0/85/0 & 0/85/0 & 1/84/0 & 2/83/0 & 1/84/0 \\
Median of probs. & 5/80/0 & 1/84/0 & 0/85/0 & 0/85/0 & 11/74/0 & 13/70/2 & 13/72/0 \\
\cite{feng2014distributed} & 5/80/0 & 4/81/0 & 0/85/0 & 0/85/0 & 3/81/1 & 5/78/2 & 8/77/0 \\
\cite{pmlr-v80-yin18a} & 25/60/0 & 49/36/0 & 7/78/0 & 19/66/0 & 12/73/0 & 4/81/0 & 2/83/0 \\
\cite{pregibon1982resistant} & 47/38/0 & 40/45/0 & 48/37/0 & 44/41/0 & 40/45/0 & 44/41/0 & 41/44/0 \\
Batch norm & 0/85/0 & 0/85/0 & 0/85/0 & 0/85/0 & 0/85/0 & 0/85/0 & 0/85/0 \\
 \hline
 \end{tabular}
 
\end{subtable}
\hfill
\begin{subtable}{1\textwidth}
\centering
 \caption{Summary of the results for $p = 0.2$ and shuffled labels, over all 85 prediction tasks}
 \label{table:animals_p_small_shuffle_labels}
 \begin{tabular}{|c || c | c | c | c | c | c || c |} 
 \hline
 \backslashbox{Baseline}{$n$} & $n = 10$ & $n = 20$ & $n = 30$ & $n = 40$ & $n = 50$ & $n = 55$ & $n = 59$ \\ %[0.5ex] 
 \hline\hline 
Reference only & 84/1/0 & 84/1/0 & 84/1/0 & 84/1/0 & 77/8/0 & 74/11/0 & 73/12/0 \\
All data & 0/85/0 & 0/85/0 & 0/85/0 & 0/85/0 & 0/85/0 & 0/85/0 & 0/85/0 \\
Median of probs. & 4/81/0 & 4/81/0 & 1/84/0 & 1/84/0 & 1/84/0 & 0/85/0 & 0/85/0 \\
\cite{feng2014distributed} & 5/80/0 & 4/81/0 & 1/84/0 & 1/84/0 & 1/84/0 & 0/85/0 & 0/85/0 \\
\cite{pmlr-v80-yin18a} & 17/68/0 & 31/54/0 & 4/81/0 & 1/84/0 & 1/84/0 & 0/85/0 & 2/83/0 \\
\cite{pregibon1982resistant} & 47/38/0 & 54/31/0 & 49/36/0 & 53/32/0 & 52/33/0 & 54/31/0 & 51/34/0 \\
Batch norm & 0/85/0 & 0/85/0 & 0/85/0 & 0/85/0 & 0/84/1 & 0/84/1 & 0/84/1 \\
 \hline
 \end{tabular}
\end{subtable}
\hfill
\begin{subtable}{1\textwidth}
\centering
 \caption{Summary of the results for $p = 0.2$ and shuffled features, over all 85 prediction tasks}
 \label{table:animals_p_small_shuffle_inputs}
 \begin{tabular}{|c || c | c | c | c | c | c || c |} 
 \hline
 \backslashbox{Baseline}{$n$} & $n = 10$ & $n = 20$ & $n = 30$ & $n = 40$ & $n = 50$ & $n = 55$ & $n = 59$ \\ %[0.5ex] 
 \hline\hline 
Reference only & 84/1/0 & 85/0/0 & 83/2/0 & 82/3/0 & 73/12/0 & 66/19/0 & 61/24/0 \\
All data & 0/85/0 & 0/85/0 & 0/85/0 & 0/85/0 & 0/85/0 & 0/85/0 & 0/85/0 \\
Median of probs. & 5/80/0 & 4/81/0 & 4/81/0 & 4/81/0 & 3/82/0 & 3/82/0 & 2/83/0 \\
\cite{feng2014distributed} & 5/80/0 & 4/81/0 & 4/81/0 & 4/81/0 & 3/82/0 & 3/82/0 & 2/83/0 \\
\cite{pmlr-v80-yin18a} & 8/77/0 & 5/80/0 & 4/81/0 & 4/81/0 & 3/82/0 & 3/82/0 & 2/83/0 \\
\cite{pregibon1982resistant} & 50/35/0 & 44/41/0 & 44/41/0 & 52/33/0 & 49/36/0 & 51/34/0 & 39/46/0 \\
Batch norm & 0/85/0 & 0/85/0 & 0/85/0 & 0/85/0 & 0/85/0 & 0/85/0 & 0/85/0 \\
 \hline
 \end{tabular}
\end{subtable}
\end{table*}

\pagebreak

\begin{table*}
\ContinuedFloat
\begin{subtable}{1\textwidth}
\centering
 \caption{Summary of the results for $p = 0.2$ and blurred images, over all 85 prediction tasks}
 \label{table:animals_p_small_blured}
 \begin{tabular}{|c || c | c | c | c | c | c || c |} 
 \hline
 \backslashbox{Baseline}{$n$} & $n = 10$ & $n = 20$ & $n = 30$ & $n = 40$ & $n = 50$ & $n = 55$ & $n = 59$ \\ %[0.5ex] 
 \hline\hline 
Reference only & 85/0/0 & 84/1/0 & 84/1/0 & 83/2/0 & 83/2/0 & 84/1/0 & 84/1/0 \\
All data & 0/85/0 & 0/85/0 & 0/85/0 & 0/85/0 & 0/85/0 & 0/85/0 & 0/85/0 \\
Median of probs. & 4/81/0 & 3/82/0 & 4/81/0 & 6/79/0 & 7/78/0 & 10/75/0 & 10/75/0 \\
\cite{feng2014distributed} & 4/81/0 & 3/82/0 & 4/81/0 & 6/79/0 & 7/78/0 & 10/75/0 & 11/74/0 \\
\cite{pmlr-v80-yin18a} & 15/70/0 & 15/70/0 & 6/79/0 & 9/76/0 & 14/71/0 & 18/67/0 & 22/63/0 \\
\cite{pregibon1982resistant} & 57/28/0 & 56/29/0 & 56/29/0 & 55/30/0 & 55/30/0 & 55/30/0 & 58/27/0 \\
Batch norm & 0/85/0 & 0/85/0 & 0/85/0 & 0/85/0 & 0/85/0 & 0/85/0 & 0/85/0 \\
 \hline
 \end{tabular}
\end{subtable}
\hfill
\begin{subtable}{1\textwidth}
\centering
 \caption{Summary of the results for $p = 0.2$ and dead pixels, over all 85 prediction tasks}
 \label{table:animals_p_small_pixels}
 \begin{tabular}{|c || c | c | c | c | c | c || c |} 
 \hline
 \backslashbox{Baseline}{$n$} & $n = 10$ & $n = 20$ & $n = 30$ & $n = 40$ & $n = 50$ & $n = 55$ & $n = 59$ \\ %[0.5ex] 
 \hline\hline 
Reference only & 85/0/0 & 84/1/0 & 84/1/0 & 83/2/0 & 84/1/0 & 82/3/0 & 83/2/0 \\
All data & 0/85/0 & 0/85/0 & 0/85/0 & 0/85/0 & 0/85/0 & 0/85/0 & 0/85/0 \\
Median of probs. & 6/79/0 & 5/80/0 & 4/81/0 & 4/81/0 & 4/81/0 & 3/82/0 & 5/80/0 \\
\cite{feng2014distributed} & 6/79/0 & 5/80/0 & 4/81/0 & 3/82/0 & 4/81/0 & 4/81/0 & 4/81/0 \\
\cite{pmlr-v80-yin18a} & 12/73/0 & 14/71/0 & 7/78/0 & 6/79/0 & 6/79/0 & 9/76/0 & 10/75/0 \\
\cite{pregibon1982resistant} & 30/55/0 & 36/49/0 & 26/59/0 & 27/58/0 & 23/62/0 & 25/60/0 & 26/59/0 \\
Batch norm & 0/85/0 & 0/85/0 & 0/85/0 & 0/85/0 & 0/85/0 & 0/85/0 & 1/84/0 \\
 \hline
 \end{tabular}
\end{subtable}
\hfill
\begin{subtable}{1\textwidth}
\centering
 \caption{Summary of the results for $p = 0.2$ and RGB channels swapped, over all 85 prediction tasks}
 \label{table:animals_p_small_RGB}
 \begin{tabular}{|c || c | c | c | c | c | c || c |} 
 \hline
 \backslashbox{Baseline}{$n$} & $n = 10$ & $n = 20$ & $n = 30$ & $n = 40$ & $n = 50$ & $n = 55$ & $n = 59$ \\ %[0.5ex] 
 \hline\hline 
Reference only & 84/1/0 & 84/1/0 & 85/0/0 & 84/1/0 & 82/3/0 & 77/8/0 & 75/10/0 \\
All data & 0/85/0 & 0/85/0 & 0/85/0 & 0/85/0 & 0/85/0 & 0/85/0 & 0/85/0 \\
Median of probs. & 4/81/0 & 4/81/0 & 3/82/0 & 2/83/0 & 2/83/0 & 1/84/0 & 1/84/0 \\
\cite{feng2014distributed} & 5/80/0 & 4/81/0 & 3/82/0 & 2/83/0 & 2/83/0 & 1/84/0 & 1/84/0 \\
\cite{pmlr-v80-yin18a} & 18/67/0 & 32/53/0 & 6/79/0 & 3/82/0 & 3/82/0 & 2/83/0 & 2/83/0 \\
\cite{pregibon1982resistant} & 51/34/0 & 52/33/0 & 52/33/0 & 56/29/0 & 45/40/0 & 52/33/0 & 52/33/0 \\
Batch norm & 0/85/0 & 0/85/0 & 0/85/0 & 0/85/0 & 0/85/0 & 0/85/0 & 0/85/0 \\

 \hline
 \end{tabular}
\end{subtable}
\end{table*}

\end{document}